\newif\ifnoappendix
\newif\ifpreprint
\def\BibTeX{{\rm B\kern-.05em{\sc i\kern-.025em b}\kern-.08em
    T\kern-.1667em\lower.7ex\hbox{E}\kern-.125emX}}
\DeclareMathOperator{\sumtau}{\sum_{t=K+1}^{\tau}}
\DeclareMathOperator*{\argmax}{arg\,max}
\newcommand{\avgreward}{\ensuremath{\bar \mu^r_k(t)}}
\newcommand{\avgcost}{\ensuremath{\bar \mu^c_k(t)}}
\newcommand{\expectrew}{\ensuremath{\mu_k^r}}
\newcommand{\expectcost}{\ensuremath{\mu_k^c}}
\newcommand{\rcratio}{\ensuremath{\frac{\omega_{k+}^r(t)}{\omega_{k-}^c(t)}}}
\newcommand{\omegaplus}{\ensuremath{\omega_{k+}^r(t)}}
\newcommand{\omegaminus}{\ensuremath{\omega_{k-}^c(t)}}
\newcommand{\eq}[1]{Eq.~\eqref{#1}}
\theoremstyle{definition}
\newtheorem{example}{Example}
\title{Budgeted Multi-Armed Bandits with\\ Asymmetric Confidence Intervals}
\author{%
  Marco Heyden, Vadim Arzamasov, Edouard Fouch\'{e}, Klemens B\"{o}hm
  \\
  % \thanks{Use footnote for providing further information about author (webpage, alternative address)---\emph{not} for acknowledging funding agencies.}\\
  Karlsruhe Institute of Technology \\
  Karlsruhe, Germany \\
  \texttt{\{marco.heyden, vadim.arzamasov, edouard.fouche, klemens.boehm\}@kit.edu } \\
  % \And
  % Vadim Arzamasov \\
  % Karlsruhe Institute of Technology \\
  % Karlsruhe, Germany \\
  % \texttt{vadim.arzamasov@kit.edu} \\
  % \And
  % Edouard Fouch\'{e}\\
  % Karlsruhe Institute of Technology \\
  % Karlsruhe, Germany \\
  % \texttt{edouard.fouche@kit.edu} \\
  % \And
  % Klemens B\"{o}hm \\
  % Karlsruhe Institute of Technology \\
  % Karlsruhe, Germany \\
  % \texttt{klemens.boehm@kit.edu} \\
}
\begin{document}

\maketitle

\begin{abstract}

We study the stochastic Budgeted Multi-Armed Bandit (MAB) problem, where a player chooses from $K$ arms with unknown expected rewards and costs. The goal is to maximize the total reward under a budget constraint. A player thus seeks to choose the arm with the highest reward-cost ratio as often as possible. Current state-of-the-art policies for this problem have several issues, which we illustrate. To overcome them, we propose a new upper confidence bound (UCB) sampling policy, $\omega$-UCB, that uses asymmetric confidence intervals. These intervals scale with the distance between the sample mean and the bounds of a random variable, yielding a more accurate and tight estimation of the reward-cost ratio compared to our competitors. We show that our approach has logarithmic regret and consistently outperforms existing policies in synthetic and real settings. 
\end{abstract}

\section{Introduction}\label{sec:intro}

In the stochastic Multi-Armed Bandit (MAB) problem, a player repeatedly plays one of $K$ arms and receives a corresponding random reward. The goal is to maximize the cumulative reward by playing the arm with the highest expected reward as often as possible. The expected rewards are initially unknown, so the player must balance trying arms to learn their expected rewards (exploration) versus using the current information to play arms with known high expected rewards (exploitation). 

In the stochastic Budgeted MAB problem~\cite{tran-thanh_epsilon-first_2010}, a player must consider not only the potential rewards but also the associated random costs for each arm. The player chooses arms until the available budget is exhausted. Budgeted MABs model real-world situations such as the selection of a cloud service provider~\cite{ardagna_game_2011}, energy-efficient task selection for battery-powered embedded devices~\cite{tran-thanh_knapsack_2012}, bid optimization~\cite{borgs_dynamics_2007,chakraborty_selective_2010,ben-yehuda_deconstructing_2013}, or optimizing advertising on social media. 

\begin{example}[Social media advertising] \label{ex:adverstising}
Consider a retail company that wants to advertise products on a social network platform. 
The retail company provides to the platform an advertisement campaign consisting of multiple ads, as well as an advertisement budget. 
Each time a user clicks on an ad (an arm), the platform charges the retail company (a cost). Within the given budget, the retailer wants to find the ads which maximize the likelihood of a subsequent purchase (a reward). 
Both the reward and the cost are random variables since they depend on the actions of users and the competition from other advertisers. A Budgeted MAB algorithm can help to find the most promising ads in real time.  
\end{example} 

A variety of policies has been proposed to address the Budgeted MAB problem. Section~\ref{sec:related_work} provides a summary. Many policies extend ideas from traditional multi-armed bandit algorithms, in which the costs of arms are assumed to be constant, including Thompson Sampling~\cite{xia_thompson_2015} and Upper Confidence Bound (UCB) sampling~\cite{xia_finite_2017}. Several studies~\cite{xia_budgeted_2015,xia_budgeted_2016,xia_finite_2017,watanabe_kl-ucb-based_2017,watanabe_ucb-sc_2018} indicate that UCB-sampling policies perform well in practice, in particular when the budget is small. 

UCB-sampling policies continuously update an upper bound of the ratio of the expected rewards and costs of each arm, and play the arm with the highest upper bound. We distinguish between three types: Some policies~\cite{watanabe_kl-ucb-based_2017, watanabe_ucb-sc_2018, xia_finite_2017} compute the bound from the ratio of the sample average reward and average cost plus some uncertainty-related term (cf.~\eq{eq:high-level-ucb}, left). We call this type ``united'' (u). Other policies~\cite{xia_finite_2017, badanidiyuru_bandits_2013, xia_budgeted_2016} divide the reward's upper confidence bound by the cost's lower confidence bound (LCB) (cf.~\eq{eq:high-level-ucb}, right). We refer to this type as ``composite'' (c). There also are ``hybrid'' (h) policies~\cite{xia_finite_2017, xia_budgeted_2015} that combine the united and composite types. 

\begin{equation}\label{eq:high-level-ucb}
    \textit{UCB}_u = \frac{\textit{average reward}}{\textit{average cost}} + \textit{uncertainty} \quad\quad \textit{UCB}_c = \frac{\textit{average reward} + \textit{uncertainty}}{\textit{average cost} - \textit{uncertainty}}
\end{equation}

However, all of the current policies have at least one of the following issues: 
\begin{itemize}[noitemsep]
	\item[(i1)] \textbf{Over-optimism:} The policy often computes UCB that are too tight.
	\item[(i2)] \textbf{Over-pessimism:} The policy often computes UCB that are too loose.
	\item[(i3)] \textbf{Invalid values:} Negative or undefined UCB occur if the cost's lower confidence bound in \eq{eq:high-level-ucb} becomes negative or zero. This can happen, for instance, when computing the lower confidence bound of an arm's expected cost with Hoeffding's inequality~\cite{xia_finite_2017,xia_budgeted_2015}. 
\end{itemize}

To illustrate (i1) and (i2), we randomly parameterized \numprint{10000} Bernoulli reward and cost distributions and sampled from them. We used these samples to compute 99\% confidence intervals of the reward-cost ratio using several state-of-the-art UCB-sampling policies. The left plot of Figure~\ref{fig:issues-illustration} shows the share of cases when the expected reward-cost ratio exceeds (i.e., violates) its UCB. Values above 1\% indicate overly tight bounds. The right plot shows the UCB of the reward-cost ratio divided by its expectation, with higher values indicating looser bounds.
Existing united policies (u) tend to suffer from issue (i1), hybrid approaches suffer from either of both issues, and issue (i2) mainly affects the composite approach (c). 

To address~(i1) and~(i2), some approaches provide a hyperparameter that allows to adjust the confidence interval manually~\cite{xia_finite_2017}. 
However, setting such a hyperparameter is difficult in practice since it depends on the unknown mean and variance of the reward and cost distributions. To address issue~(i3), current approaches set the UCB of the ratio to infinity~\cite{xia_finite_2017} or the cost LCB to a small positive value~\cite{xia_budgeted_2015}. These heuristic solutions largely ignore the information already acquired about the cost distribution and tend to cause either (i1) or (i2).

\begin{figure}[htb]
     \centering
         \includegraphics[width=\linewidth]{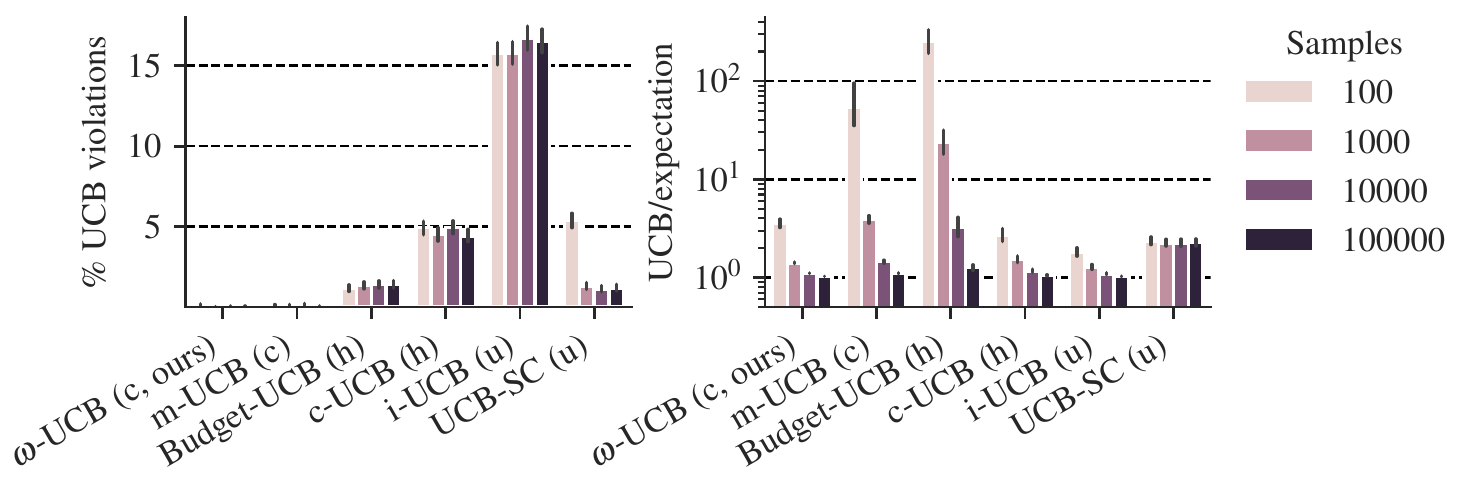}
         
    \caption{Issues of existing work}
    \label{fig:issues-illustration}
\end{figure} 

\paragraph{Contributions} 
(1) We derive asymmetric confidence intervals for bounded random variables. These intervals have the same range as the random variable and scale with the distance between the sample mean and the boundaries. 
Our formula generalizes Wilson's score interval for binomial proportions~\cite{wilson_probable_1927} to arbitrary bounded random variables.
(2) We introduce a policy called $\omega$-UCB, which leverages these confidence intervals to address issues (i1)--(i3). 
We also propose an extension of $\omega$-UCB, called $\omega^*$-UCB, that uses the observed sample variances of the arms' rewards and cost to further tighten the UCB. (3) We prove that our policy has logarithmic regret. (4) We conduct experiments on typical settings found in the literature and real-world social network advertising data to compare the performance of $\omega$-UCB and $\omega^*$-UCB against state-of-the-art policies. 
Our results demonstrate that both policies have substantially lower regret than the competitors for both small and large budgets.
(5) We share the code of our experiments.\footnote{
\ifpreprint
\url{https://github.com/heymarco/OmegaUCB}
\else
\url{https://anonymous.4open.science/r/OmegaUCB}
\fi
}
\section{Problem definition}
\label{sec:problem_definition}
We focus on a stochastic setting with $K$ arms.
Each arm $k$ has continuous or discrete reward and cost distributions with unknown expected values $\expectrew \in [0,1)$ and $\expectcost \in (0,1]$, respectively. Assume without loss of generality that arm $k=1$ has the highest ratio ${\expectrew}/{\expectcost}$ among all arms.
At time $t$ a player chooses an arm $k_t\in\{1,\dots,K\}$ and observes the reward $r_t\in[0,1]$ and the cost $c_t\in[0,1]$. 
We assume that the arms are independent and that rewards and costs observed at different time steps are independent and identically distributed (iid). This is consistent with previous work~\cite{xia_budgeted_2015, xia_thompson_2015, xia_finite_2017, watanabe_kl-ucb-based_2017, watanabe_ucb-sc_2018}. 
We do not make any assumptions about the correlation between rewards and costs of the same arm. 
The game ends after $T_B$ plays that exhaust the available budget $B$. 

Let $\mathds{1}_k(k_t)$ be the indicator function: $\mathds{1}_k(k_t) = 1$ iff $k_t = k$, else $\mathds{1}_k(k_t)=0$. The number of plays, and the sample average of rewards and costs of arm $k$ at time $T$ are:
\begin{equation}
    n_k(T)=\sum_{t=1}^{T}\mathds{1}_k(k_t)\qquad 
    \bar\mu_k^r(T) = \frac{1}{n_k(T)}\sum_{t=1}^T \mathds{1}_k(k_t)r_t \qquad \bar\mu_k^c(T) = \frac{1}{n_k(T)}\sum_{t=1}^T \mathds{1}_k(k_t)c_t
\end{equation}

The goal of the player is to minimize the pseudo-regret compared to the cumulative reward $R^*$ of an optimal policy, given by
$R^* - \mathbb{E}\sum_{t=1}^{T_B}r_t$. Finding such optimal policy in Budgeted MABs is known to be np-hard, due to the ``knapsack problem''~\cite{tran-thanh_epsilon-first_2010}. However, choosing arm~1 only leads to a suboptimality of at most $2\mu_1^r/\mu_1^c$, negligible for not too small budgets~\cite{xia_thompson_2015}. Thus, previous work~\cite{xia_budgeted_2015, xia_thompson_2015, xia_finite_2017, watanabe_kl-ucb-based_2017, watanabe_ucb-sc_2018}, as well as our own approach, aim to minimize regret relative to a policy that always selects arm~1:
\begin{equation}
	\text{Regret} = \sum_{i=1}^K \expectcost\Delta_k\mathbb{E}[n_k(T_B)],\qquad 
	\textrm{where~} \Delta_k = \frac{\mu_1^r}{\mu_1^c} - \frac{\expectrew}{\expectcost}
\end{equation}

\section{Related work}\label{sec:related_work}

There exists a plethora of different MAB-related settings and policies. We refer to~\cite{lattimore_BanditAlgorithms_2020} for an overview and focus on algorithms developed for the Budgeted MAB setting in this section.

Tran-Thanh et al.~\cite{tran-thanh_epsilon-first_2010} introduced the Budgeted MAB problem and proposed an $\epsilon$-first policy. Subsequent policies KUBE~\cite{tran-thanh_knapsack_2012} and PD-BwK~\cite{badanidiyuru_bandits_2013} address budgeted MABs as Bandits with Knapsacks, where the size of the knapsack represents the available budget. However, KUBE assumes deterministic costs and both methods require knowledge of $B$. Another approach, UCB-BV1~\cite{ding_multi-armed_2013}, addresses discrete random costs. Such assumptions limit the applicability of KUBE, PD-BwK, and UCB-BV1. Later solutions~\cite{xia_budgeted_2015, xia_thompson_2015, xia_finite_2017, watanabe_kl-ucb-based_2017, watanabe_ucb-sc_2018} adapted concepts from traditional MABs, such as Upper Confidence Bound (UCB)~\cite{auer_finite-time_2002} or Thompson sampling~\cite{thompson_likelihood_1933,thompson_theory_1935} and can deal with continuous random costs and unknown budget, improving their applicability. 
However, the one policy based on Thompson sampling, BTS~\cite{xia_thompson_2015}, requires transforming continuous rewards and costs into Bernoulli-samples. As a result, the policy disregards information about the variance of rewards and costs, causing over-pessimism (i2) when the variance of the reward or cost distribution is small. MRCB~\cite{xia_budgeted_2015} deals with the challenge of playing multiple arms in each time step; when playing only one arm at a time, the policy becomes m-UCB~\cite{xia_finite_2017} that is similar to our policy. However, m-UCB relies on Hoeffding's inequality, which does not take the distance between a random variable's sample mean and boundaries into account. To see why this is problematic, consider the following example:

\begin{example}[m-UCB]
    Assume two arms with $\mu_1^r = 0.8, \mu_1^c = 0.2$, $\mu_2^r = 0.1, \mu_2^c = 0.1$. Clearly, arm 1 should be preferred for a reasonably large budget. However, m-UCB shows a consistent bias towards pulling arm 2. For instance, if $t=10000$ and $n_1=n_2=1000$, using m-UCB with $\alpha=1$ would yield reward-cost UCB values\footnote{See Appendix~\ref{app:related_work} for mathematical details} of $\approx2.95$ for arm~1 and $\approx48.6$ for arm~2 due to the high influence of the denominator in \eq{eq:high-level-ucb} (rhs). m-UCB would hence pull arm 2. In comparison, $\omega$-UCB would compute values of $5.5$ and $2.1$, and pull arm 1.
\end{example}

All the above policies either 
have issues (i1)--(i3)~\cite{xia_finite_2017,xia_budgeted_2015,watanabe_ucb-sc_2018,watanabe_kl-ucb-based_2017,xia_thompson_2015}, 
are not designed for continuous random costs~\cite{tran-thanh_knapsack_2012,ding_multi-armed_2013,xia_thompson_2015}, 
or have been shown to perform inferior to others~\cite{tran-thanh_epsilon-first_2010,ding_multi-armed_2013,tran-thanh_knapsack_2012}.
Figure \ref{fig:related_work} provides a compilation of existing head-to-head empirical comparisons between various policies. Upwards pointing triangles indicate that the policy in the corresponding row outperformed the policy in the corresponding column in the respective paper, while downward pointing triangles indicate the opposite. Circles represent cases where both policies performed similarly, while horizontal lines indicate that the policies have not been compared.
One sees that KUBE~\cite{tran-thanh_knapsack_2012} outperforms $\epsilon$-first~\cite{tran-thanh_epsilon-first_2010}, while UCB-BV1~\cite{ding_multi-armed_2013} and BTS\cite{xia_budgeted_2015} outperform KUBE. 
UCB-BV1 is inferior to more recent policies~\cite{xia_thompson_2015, xia_budgeted_2015, watanabe_kl-ucb-based_2017, watanabe_ucb-sc_2018}. BTS~\cite{xia_thompson_2015}, b-greedy~\cite{xia_finite_2017}, and \{i, c, m\}-UCB~\cite{xia_finite_2017} outperform PD-BwK~\cite{badanidiyuru_bandits_2013}. 
We will compare our policy to the best performing existing policies~--- BTS, Budget-UCB, \{i, c, m\}-UCB, b-greedy, and UCB-SC+. 

\begin{figure}[t]
    \begin{subfigure}[b]{.282\textwidth}
    \raggedleft
        \begin{tabular}{rr}
             \toprule
             Policy& Ref. \\
             \midrule
             $\varepsilon$-first & \cite{tran-thanh_epsilon-first_2010} \\ 
             KUBE &\cite{tran-thanh_knapsack_2012} \\
             UCB-BV1 & \cite{ding_multi-armed_2013}\\ 
             PD-BwK & \cite{badanidiyuru_bandits_2013} \\
             Budget-UCB & \cite{xia_budgeted_2015} \\
             BTS & \cite{xia_thompson_2015} \\
             MRCB & \cite{xia_budgeted_2016} \\
             m-UCB & \cite{xia_finite_2017}\\
             b-greedy & \cite{xia_finite_2017}\\
             c-UCB & \cite{xia_finite_2017}\\
             i-UCB & \cite{xia_finite_2017} \\
             KL-UCB-SC+ & \cite{watanabe_kl-ucb-based_2017} \\
             UCB-SC+ & \cite{watanabe_ucb-sc_2018} \\
             \bottomrule
        \end{tabular}
        \vspace{0cm}
    \end{subfigure}
    \begin{subfigure}[b]{.38\textwidth}
        \raggedright
    	\includegraphics[width=\linewidth]{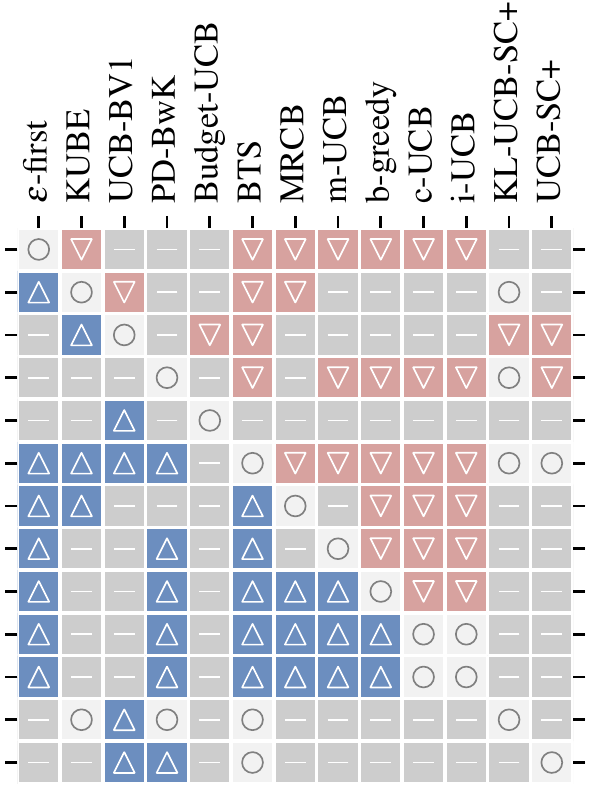}
        \vspace{-.287cm}
    \end{subfigure}
    \begin{subfigure}[b]{.2\textwidth}
        \begin{tabular}{lcc}
            \toprule
            Year& Type & Compared \\
            \midrule 
             2010& -- & $\cross$ \\
             2012 & -- & $\cross$\\
             2013 & h & $\cross$\\
             2013 & c & $\cross$\\
             2015 & h & $\checkmark$\\
             2015 & -- & $\checkmark$\\
             2016 & c & -- \\
             2017 & c & $\checkmark$ \\
             2017 & -- & $\checkmark$\\
             2017 & h  & $\checkmark$\\
             2017 & u  & $\checkmark$\\
             2017 & u  & --\\
             2018 & u  & $\checkmark$\\
             \bottomrule
        \end{tabular}
        \vspace{0cm}
    \end{subfigure}
    
\caption{Empirical performance of different Budgeted MAB policies according to related work}
\label{fig:related_work}
\end{figure}

\section{Our policy}

We now detail our policy $\omega$-UCB and analyze it theoretically.

\subsection{\texorpdfstring{$\omega$}{omega}-UCB}\label{sec:omega-ucb}

$\omega$-UCB starts by playing each arm once. 
At each subsequent time step $t$, the policy chooses the arm $k_t$ with the highest upper confidence bound of the ratio of the expected reward $\mu^r_k$ to the expected cost $\mu_k^c$. 
Let $\omega_{k+}^r(\alpha,t)$ denote the upper confidence bound of $\mu^r_k$ for a confidence level $1-\alpha$. Similarly, $\omega_{k-}^c(\alpha,t)$ is the lower confidence bound of $\mu_k^c$. 
$\omega$-UCB chooses $k_t$ according to:
\begin{equation}\label{eq:index}
    k_t = \argmax_{k\in[K]} \Omega_k(\alpha,t),\qquad \text{where } \Omega_k(\alpha,t) = \frac{\omega_{k+}^r(\alpha,t)}{\omega_{k-}^c(\alpha,t)}
\end{equation} 

Unlike other policies that rely on the same principle~\cite{xia_finite_2017,xia_budgeted_2015,xia_budgeted_2016,badanidiyuru_bandits_2013}, $\omega$-UCB calculates asymmetric confidence bounds that are shifted towards the center of the range of the random variable. This leads to tighter UCB for the reward-cost ratio especially when an arm's expected cost or the number of plays is low.

\begin{restatable}[Asymmetric confidence interval for bounded random variables]{theorem}{thwilsonscore}
\label{th:wilson-score}
Let $X$ be a random variable bounded in the interval $[m,M]$, with unknown expected value $\mu\in[m,M]$ and variance $\sigma^2$. 
Let $z$ denote the number of standard deviations required to achieve $1-\alpha$ confidence in coverage of the standard normal distribution. 
Let $\bar\mu$ be the sample mean of $n$ iid samples of $X$. Then,
\begin{equation}\label{eq:wilson}
    \Pr[\mu\not\in \left[\omega_-(\alpha),\omega_+(\alpha)\right]] \leq \alpha,
\quad\textrm{with~}
\omega_\pm(\alpha)
= \frac{B}{2A} \pm \sqrt{\frac{B^2}{4A^2} - \frac{C}{A}},
\end{equation}
where 
\begin{equation}
A=n+z^2\eta, \quad B=2n\bar\mu + z^2\eta (M + m), \quad C=n\bar\mu^2 + z^2\eta Mm, \quad\textrm{and}
\end{equation}
\begin{equation}
	\eta=\frac{\sigma^2}{(M-\mu)(\mu-m)} \textrm{ if } \mu\in (m, M),\quad \textrm{ and } \eta=1 \textrm{ if } \mu\in\{m, M\}.
\end{equation}
\end{restatable}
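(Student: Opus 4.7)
The proof should follow the classical derivation of Wilson's score interval for binomial proportions, generalized to arbitrary bounded support. The algebraic key is that $(M-\mu)(\mu-m)$ is the largest possible variance of a random variable on $[m,M]$ with mean $\mu$, attained by the two-point distribution on $\{m,M\}$, so writing $\sigma^2 = \eta(M-\mu)(\mu-m)$ with $\eta\in(0,1]$ expresses the variance as a quadratic in $\mu$ --- the same feature that makes Wilson's construction work in the Bernoulli case, where $\eta=1$.

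The plan is first to invoke a concentration statement of the form
\[ \Pr\!\bigl[n(\bar\mu - \mu)^2 > z^2 \sigma^2\bigr] \leq \alpha, \]
which, given that $z$ is defined as the standard-normal coverage quantile, is natural to justify either asymptotically via the CLT or non-asymptotically via a Bernstein- or Chebyshev-type inequality. Second, I would substitute $\sigma^2 = \eta(M-\mu)(\mu-m)$ and expand $(M-\mu)(\mu-m) = -\mu^2 + (M+m)\mu - Mm$, collecting terms in $\mu$ to arrive at $A\mu^2 - B\mu + C \leq 0$ with exactly the $A,B,C$ of the theorem. Since $A>0$, this inequality is equivalent to $\mu\in[\omega_-(\alpha),\omega_+(\alpha)]$ with $\omega_\pm$ given by the quadratic formula; the discriminant is nonnegative because on the high-probability event the true $\mu$ itself satisfies the inequality, so a real solution exists.

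The boundary case $\mu\in\{m,M\}$ needs separate treatment: $(M-\mu)(\mu-m)=0$ forces $\sigma^2=0$, so $X\equiv\mu$ almost surely and $\bar\mu=\mu$; a direct evaluation with the convention $\eta=1$ confirms that $\mu$ is itself a root of $A\mu^2-B\mu+C=0$, so it lies in $[\omega_-,\omega_+]$ and coverage holds trivially.

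The main obstacle is the concentration step itself. The quadratic algebra, the discriminant check, and the boundary case are all routine once the bound is in place. What requires care is justifying $n(\bar\mu-\mu)^2\le z^2\sigma^2$ at confidence $1-\alpha$ with $z$ a normal quantile: this is exact only asymptotically under the CLT, so a rigorous finite-sample version either has to absorb some slack (as with a Bernstein-type bound, which would shift the constants) or invoke a sharper variance-aware inequality. A secondary subtlety is that the roots $\omega_\pm$ still depend on the unknown $\eta$; in practice one replaces $\eta$ by a known upper bound (typically $1$) to obtain a valid but looser interval, whereas estimating $\eta$ from the sample is essentially the refinement realized by $\omega^*$-UCB.
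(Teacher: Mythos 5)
Your proposal follows essentially the same route as the paper's proof: an (asymptotic) CLT-based concentration bound of the form $n(\bar\mu-\mu)^2 \le z^2\sigma^2$, the substitution $\sigma^2=\eta(M-\mu)(\mu-m)$ via the Bhatia--Davis bound, solving the resulting quadratic $A\mu^2-B\mu+C\le 0$ for $\mu$, and a separate trivial-coverage argument for $\mu\in\{m,M\}$. Your closing caveats --- that the normal-quantile step is only exact asymptotically and that $\eta$ is unknown in practice --- are accurate and in fact more candid than the paper's own presentation, which states the CLT approximation as an equality.
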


\begin{proof}
    See Appendix~\ref{proof:wilson-score}.
\end{proof} 

Appendix~\ref{app:asymmetry} illustrates the level of asymmetry of the above interval. In summary, the interval is shifted towards the center of the range of the random variable and is maximal for $\mu\in \{m, M\}$. However, asymmetry decreases as $n$ increases. %, leading to symmetry for $n\to\infty$.

According to the Bhatia-Davis inequality~\cite{bhatia_BetterBoundVariance_2000}, $0\leq\sigma^2\leq(M-\mu)(\mu-m)$, and hence $\eta\in[0,1]$. 
For the special case of Bernoulli random variables, $\eta=1$, $m=0$, $M=1$, and Theorem~\ref{th:wilson-score} recovers Wilson's original confidence interval for Binomial proportions~\cite{wilson_probable_1927}. 
However, this theorem is more flexible than Wilson's original method. It enables tighter confidence intervals for non-Bernoulli costs or rewards, by setting $\eta<1$ when a variance estimate is available. 
Our experiments will demonstrate that this flexibility leads to a significant improvement in performance.

The following theorem connects $\Omega_k(\alpha,t)$ and the confidence level of the ratio ${\expectrew}/{\expectcost}$.

\begin{restatable}[UCB for ratio of expected values of two random variables]{theorem}{cordeviationrewardcost}
\label{cor:deviation-reward-cost}
Let $R$ and $C$ be two bounded random variables with expected values $\mu^r \geq 0$ and $\mu^c > 0$. 
Let $\omega_+^r(\alpha)\geq 0$ denote the upper confidence bound of $R$ and $\omega_-^c(\alpha) > 0$ the lower confidence bound of $C$ as given in Theorem~\ref{th:wilson-score}.
Let $\Omega(\alpha) = {\omega_+^r(\alpha)}/{\omega_-^c(\alpha)}$.
Then,
\begin{equation}
    \Pr[\frac{\mu^r}{\mu^c} > \Omega(\alpha)] \leq \alpha
\end{equation}
\end{restatable}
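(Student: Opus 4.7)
The plan is to reduce the claim to two applications of Theorem~\ref{th:wilson-score}, one for $R$ and one for $C$, combined by a union bound. The essential observation is that on the positive orthant the map $(x,y)\mapsto x/y$ is nondecreasing in $x$ and nonincreasing in $y$, so $\mu^r/\mu^c$ can only exceed $\omega_+^r(\alpha)/\omega_-^c(\alpha)$ if at least one of the two bounds on the individual means is violated. Concretely, whenever both $\mu^r \leq \omega_+^r(\alpha)$ and $\mu^c \geq \omega_-^c(\alpha)$ hold, the assumptions $\mu^c>0$, $\omega_-^c(\alpha)>0$, and $\omega_+^r(\alpha)\geq 0$ give
\begin{equation}
\frac{\mu^r}{\mu^c} \leq \frac{\omega_+^r(\alpha)}{\mu^c} \leq \frac{\omega_+^r(\alpha)}{\omega_-^c(\alpha)} = \Omega(\alpha).
\end{equation}
Taking the contrapositive yields the event inclusion $\{\mu^r/\mu^c > \Omega(\alpha)\} \subseteq \{\mu^r > \omega_+^r(\alpha)\} \cup \{\mu^c < \omega_-^c(\alpha)\}$, which is the only nontrivial step.

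A union bound combined with Theorem~\ref{th:wilson-score} applied to each of $R$ and $C$ separately then yields
\begin{equation}
\Pr\!\left[\frac{\mu^r}{\mu^c} > \Omega(\alpha)\right] \leq \Pr[\mu^r > \omega_+^r(\alpha)] + \Pr[\mu^c < \omega_-^c(\alpha)] \leq \alpha,
\end{equation}
which is the desired bound. I expect no real technical difficulty; the one thing to get right is the $\alpha$-budget bookkeeping. Theorem~\ref{th:wilson-score} bounds the joint two-sided miscoverage by $\alpha$, so one must either note that each one-sided tail carries at most $\alpha/2$ when $z$ is the two-sided $1-\alpha$ Gaussian quantile, or invoke Theorem~\ref{th:wilson-score} with parameter $\alpha/2$ on each of $R$ and $C$, so that the two terms on the right-hand side of the union bound sum to $\alpha$ rather than $2\alpha$. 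Once this accounting is pinned down, the rest of the proof is just the one-line monotonicity argument above.
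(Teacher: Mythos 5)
Your proposal is correct and follows essentially the same route as the paper's proof: the event inclusion $\{\mu^r/\mu^c > \Omega(\alpha)\} \subseteq \{\mu^r > \omega_+^r(\alpha)\} \cup \{\mu^c < \omega_-^c(\alpha)\}$, a union bound, and the observation that each one-sided tail carries at most $\alpha/2$. You actually spell out the monotonicity argument behind the inclusion more explicitly than the paper does, which is a minor improvement rather than a different approach.
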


\begin{proof}
Define events $E1 = \mu^r > \omega_+^r(\alpha)$ and $E2 = \mu^c < \omega_-^c(\alpha)$.
A violation of the UCB of the reward-cost ratio requires that either $E1$ or $E2$ occurs, or that both events happen simultaneously. Therefore, by the union bound we have that $\Pr\left[{\mu^r}/{\mu^c} > \Omega_k\right] = \Pr\left[{\mu^r}/{\mu^c} > {\omega_+^r(\alpha)}/{\omega_-^c(\alpha)}\right] \leq \Pr[E1] + \Pr[E2]$. $E1$ and $E2$ both occur with probablity $\leq\alpha/2$, hence $\Pr\left[{\mu^r}/{\mu^c} > \Omega_k\right] \leq \alpha$.
\end{proof}

A UCB-sampling policy that keeps the parameter $\alpha$ constant leads to linear regret in the worst case.
This is because such a policy will eventually stop exploring arms that may have high costs and low rewards in the beginning. 
To avoid this problem, $\omega$-UCB decreases the value of $\alpha$ as the time~$t$ progresses, similarly to the UCB1-policy~\cite{auer_finite-time_2002}. 
This adaptive approach helps to ensure continued exploration of arms and guarantees sub-linear regret. 
The following theorem introduces the scaling law and relates it to the confidence level. 

\begin{restatable}[Time-adaptive confidence interval]{theorem}{thadaptiveconf}
\label{th:adaptive-conf}
For an arm $k$, let $\expectrew$ be its expected reward, $\expectcost$ its expected cost, and $\Omega_k(\alpha,t)$ the upper confidence bound for ${\expectrew}/{\expectcost}$, as in \eq{eq:index}. For $\rho, t >0 $, and $\alpha(t) < 1- \sqrt{1-t^{-\rho}}$ it holds that
\begin{equation}
    \Pr[\Omega_k(\alpha,t) \geq \frac{\expectrew}{\expectcost}] \geq 1 - \alpha(t),
\end{equation}
that is, the upper confidence bound holds asymptotically almost surely.
\end{restatable}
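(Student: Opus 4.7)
The plan is to invoke Theorem~\ref{cor:deviation-reward-cost} directly at confidence level $\alpha = \alpha(t)$. That theorem already supplies $\Pr[\Omega_k(\alpha,t) \geq \mu^r_k/\mu^c_k] \geq 1 - \alpha$ for any valid $\alpha \in (0,1)$, obtained by applying Theorem~\ref{th:wilson-score} to the reward UCB and the cost LCB and combining them via a union bound. The hypothesis $\alpha(t) < 1 - \sqrt{1 - t^{-\rho}}$ guarantees $\alpha(t) \in (0,1)$ for every $t>0$, so the substitution $\alpha \leftarrow \alpha(t)$ immediately yields the displayed inequality. The only mild subtlety is that Theorem~\ref{th:wilson-score} is stated for a deterministic sample size $n$, whereas the number of pulls $n_k(t)$ here is random; I would dispose of this in the standard UCB fashion by conditioning on the realized value of $n_k(t)$ and then unconditioning, so that the $1-\alpha(t)$ coverage transfers to the random-$n_k(t)$ setting at each fixed~$t$.

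What remains is the asymptotic interpretation. I would rewrite the hypothesis using the elementary identity
\begin{equation*}
    1 - \sqrt{1 - t^{-\rho}} \;=\; \frac{t^{-\rho}}{1 + \sqrt{1 - t^{-\rho}}} \;\leq\; t^{-\rho},
\end{equation*}
which gives $\alpha(t) \leq t^{-\rho}$ and hence $\alpha(t) \to 0$ as $t \to \infty$, so the lower bound $1 - \alpha(t)$ on the UCB-validity probability tends to~$1$. For the stronger reading of ``asymptotically almost surely''—finitely many violations with probability one—I would invoke Borel--Cantelli using the same estimate: $\sum_{t \geq 1} \alpha(t) \leq \sum_{t \geq 1} t^{-\rho}$ converges for $\rho > 1$, so the event $\{\Omega_k(\alpha(t),t) < \mu^r_k/\mu^c_k\}$ occurs only finitely often a.s.

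There is no significant obstacle: the theorem is essentially a corollary of Theorem~\ref{cor:deviation-reward-cost} specialized to a time-decaying confidence schedule, with the particular shape $1 - \sqrt{1-t^{-\rho}}$ chosen (I suspect) to mesh cleanly with the regret analysis in the next section—all of the nontrivial work, namely constructing the asymmetric interval and handling the two-sided failure event, has already been absorbed into Theorems~\ref{th:wilson-score} and~\ref{cor:deviation-reward-cost}.
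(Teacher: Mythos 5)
Your argument is correct for the statement as literally written, and your final step (invoking Theorem~\ref{cor:deviation-reward-cost} at level $\alpha(t)$) coincides with the last step of the paper's proof. But read that way the theorem is an empty restatement of Theorem~\ref{cor:deviation-reward-cost}, and the hypothesis $\alpha(t) < 1-\sqrt{1-t^{-\rho}}$ does no work. The substantive content of the paper's proof --- and the part your proposal skips --- is the derivation showing that the concrete schedule the policy actually uses, $z_\rho(t)=\sqrt{2\rho\log t}$ plugged into the Wilson-type interval of Theorem~\ref{th:wilson-score}, corresponds to a miscoverage level $\alpha(t)$ satisfying exactly that bound. The paper gets there in three steps: it inverts the normal-quantile relation $1-\alpha(t)/2 = \tfrac12\bigl(1+\mathrm{erf}(z/\sqrt{2})\bigr)$, lower-bounds the error function via the first term of B\"urmann's series, $\mathrm{erf}(z/\sqrt{2}) \geq \sqrt{1-\exp(-z^2/2)}$, and then observes that $z=\sqrt{2\rho\log t}$ makes $\exp(-z^2/2)=t^{-\rho}$, yielding $\alpha(t) < 1-\sqrt{1-t^{-\rho}}$. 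Your proof takes $\alpha(t)$ as the primitive and assumes an interval at that level exists; the paper's proof goes in the opposite direction, from the $z$-schedule to the $\alpha(t)$ bound. That direction is the one the downstream analysis needs (Lemma~\ref{lemma:t_k_star} and the $\sum_t(1-\sqrt{1-t^{-\rho}})$ terms in Theorem~\ref{th:suboptimal-plays} all rest on this identification), so omitting it leaves a genuine gap.

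Two smaller remarks. Your observation about the random pull count $n_k(t)$ versus the fixed $n$ of Theorem~\ref{th:wilson-score} is a fair point that the paper itself glosses over; conditioning on $n_k(t)$ as you suggest is the standard repair. Your Borel--Cantelli addendum ($\sum_t \alpha(t) \leq \sum_t t^{-\rho} < \infty$ for $\rho>1$) is correct but goes beyond what the paper claims or uses; the paper's phrase ``asymptotically almost surely'' is only meant in the weak sense that $1-\alpha(t)\to 1$.
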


\begin{proof}
    See Appendix~\ref{proof:theorem-adaptive-conf}.
\end{proof}

With $\alpha(t) < 1 - \sqrt{1-t^{-\rho}}$, the confidence level approaches $1$ as time $t$ goes to infinity. This encourages exploration of arms that are played less frequently. Moreover, it establishes a logarithmic dependence between $z$ in Theorem~\ref{th:wilson-score} and $t$, i.e., $z_\rho(t)=\sqrt{2\rho\log t}$, which will be useful in our regret analysis. The next section analyzes the worst-case regret of $\omega$-UCB. To simplify notation, we abbreviate $\Omega(\alpha, t)$ as $\Omega(t)$, $\omega_{k-}^c(\alpha, t)$ as $\omega_{k-}^c(t)$, and $\omega_{k+}^r(\alpha, t)$ as $\omega_{k+}^r(t)$ in our regret analysis. 

\subsection{Regret analysis}

In this section, we bound the expected number of suboptimal plays $\mathbb{E}[n_k(\tau)]$ before some time step $\tau$ and use this analysis to derive the regret bound of $\omega$-UCB (cf.~Theorem~\ref{th:worst-case-regret}). 

\begin{restatable}[Number of suboptimal plays]{theorem}{thsuboptimalplays}
	\label{th:suboptimal-plays}
	For $\omega$-UCB, the expected number of plays of a suboptimal arm $k>1$ before time step $\tau$, $\mathbb{E}[n_k(\tau)]$, is upper-bounded by
	\begin{equation}\label{eq:result-n-suboptimal-arms}
		\begin{split}
			\mathbb{E}[n_k(\tau)] \leq 1 + n_k^*(\tau) + \xi(\tau, \rho),
		\end{split}
	\end{equation}
	where 
	\begin{equation}
		\xi(\tau, \rho) = \left(\tau-K\right)\left(2-\sqrt{1-\tau^{-\rho}}\right) - \sumtau\sqrt{1-t^{-\rho}},
	\end{equation}
    \begin{equation}
        n_k^*(\tau) = \frac{8\rho\log \tau}{\delta_k^2}\max\left\{ \frac{\eta_k^r\expectrew}{1-\expectrew}, \frac{\eta_k^c(1-\expectcost)}{\expectcost} \right\},\quad \delta_k = \frac{\Delta_k}{\Delta_k + \frac{1}{\expectcost}},
    \end{equation} 
and $K$ and $\Delta_k$ are defined as before, cf.~Section~\ref{sec:problem_definition}. 
\end{restatable}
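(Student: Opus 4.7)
The plan is to extend the classical UCB regret decomposition (cf.\ \cite{auer_finite-time_2002}) to our ratio UCB $\Omega_k(t)$. First I decompose
\begin{equation*}
n_k(\tau) \leq 1 + n_k^*(\tau) + \sum_{t=K+1}^{\tau} \mathds{1}\{k_t = k,\; n_k(t) \geq n_k^*(\tau)\},
\end{equation*}
where the $1$ accounts for the mandatory initial play of arm $k$, and $n_k^*(\tau)$ upper-bounds the contribution of rounds in which $k$ has been played fewer than $n_k^*(\tau)$ times. It then remains to bound the remaining sum in expectation by $\xi(\tau,\rho)$.

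The next step is to identify three ``bad'' events whose union must contain $\{k_t=k,\, n_k(t)\geq n_k^*(\tau)\}$: (E1) $\Omega_1(t) < \mu_1^r/\mu_1^c$; (E2) $\omega_{k+}^r(t) > \mu_k^r + \delta_k(1-\mu_k^r)$; and (E3) $\omega_{k-}^c(t) < (1-\delta_k)\mu_k^c$. The coverage claim is a purely algebraic contrapositive: if none of the three holds, then
\begin{equation*}
\Omega_k(t) \leq \frac{\mu_k^r + \delta_k(1-\mu_k^r)}{(1-\delta_k)\mu_k^c} \leq \frac{\mu_1^r}{\mu_1^c} \leq \Omega_1(t),
\end{equation*}
where the middle inequality is exactly what the definition $\delta_k = \Delta_k/(\Delta_k + 1/\mu_k^c)$ is engineered to make hold after substituting $\mu_1^r/\mu_1^c = \mu_k^r/\mu_k^c + \Delta_k$ and clearing denominators; this contradicts the choice $k_t=k$, which would require $\Omega_k(t)\geq \Omega_1(t)$.

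To bound the probabilities, Theorem~\ref{th:adaptive-conf} applied to arm $1$ immediately yields $\Pr[\text{(E1)}] \leq 1 - \sqrt{1-t^{-\rho}}$. For (E2), I use the equivalence implicit in the proof of Theorem~\ref{th:wilson-score}, namely that $\mu \in [\omega_-(\alpha),\omega_+(\alpha)]$ iff $(\bar\mu-\mu)^2 \leq z^2\eta(M-\mu)(\mu-m)/n$. Translating the threshold $\mu_k^r+\delta_k(1-\mu_k^r)$ into a condition on $\bar\mu_k^r(t)$ and invoking $n_k(t) \geq n_k^*(\tau)$ with $z_\rho(\tau)=\sqrt{2\rho\log\tau}$ shows that (E2) forces $\bar\mu_k^r(t)$ to exceed $\mu_k^r$ by at least $z_\rho(t)\sqrt{\eta_k^r\mu_k^r(1-\mu_k^r)/n_k(t)}$, whose probability is at most $\tfrac12(1-\sqrt{1-\tau^{-\rho}})$ by the one-sided version of Theorem~\ref{th:wilson-score}. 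A symmetric argument handles (E3). Summing via union bound and linearity gives
\begin{equation*}
\mathbb{E}[n_k(\tau)] \leq 1 + n_k^*(\tau) + \sum_{t=K+1}^{\tau}\bigl[(1-\sqrt{1-t^{-\rho}}) + (1-\sqrt{1-\tau^{-\rho}})\bigr],
\end{equation*}
and simple rearrangement reproduces exactly $\xi(\tau,\rho)$.

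The main technical obstacle I expect is the arithmetic underlying the third step: verifying that the specific threshold $n_k^*(\tau)=\frac{8\rho\log\tau}{\delta_k^2}\max\{\eta_k^r\mu_k^r/(1-\mu_k^r),\,\eta_k^c(1-\mu_k^c)/\mu_k^c\}$ is indeed large enough. This requires keeping the time-adaptive parameter $z_\rho(t)$ aligned with the peeled $z_\rho(\tau)$, showing that the deviation budget absorbs not only the deviation needed to invalidate the quadratic condition at $\mu_k^r+\delta_k(1-\mu_k^r)$ (and at $(1-\delta_k)\mu_k^c$ for cost) but also the separate concentration of $\bar\mu_k^r$ around $\mu_k^r$, and using the Bhatia--Davis-style bound $\eta\in[0,1]$ so that the coefficients $\eta_k^r$ and $\eta_k^c$ appear cleanly in the final expression matching the stated closed form.
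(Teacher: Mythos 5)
Your proposal is correct and follows essentially the same route as the paper: the same peeling at $n_k^*(\tau)$, the same union bound over the three bad events (the paper packages your (E2) and (E3) into its Lemma~\ref{lemma:t_k_star}), the same $\delta_k$-gap identity to make the middle inequality an equality, and the same concentration argument via the quadratic CLT condition with the balanced split (the paper's $\kappa=1/2$) yielding the two $\tfrac12(1-\sqrt{1-\tau^{-\rho}})$ terms. The recombination into $\xi(\tau,\rho)$ matches the paper's exactly.
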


\begin{proof}
	Appendix~\ref{proof:th_suboptimal-plays} contains the proof of the theorem. The derivation of $\delta_k$ is in Appendix~\ref{proof:delta-gap}.
\end{proof}

Lemma~4 in~\cite{xia_finite_2017} allows us to derive worst-case regret of $\omega$-UCB from Theorem~\ref{th:suboptimal-plays}.

\begin{restatable}[Worst-case regret]{theorem}{thworstcaseregret}
\label{th:worst-case-regret}
Define $\tau_B = \left\lfloor{2B}/{\min_{k\in[K]}\expectcost}\right\rfloor$ and $\Delta_k, n_k^*(\tau_B)$, and $\xi(\tau_B,\rho)$ as before.
For any $\rho>0$, the regret of $\omega$-UCB is upper-bounded by  
\begin{equation}
\label{eq:regret-wucb}
    \text{Regret} \leq \sum_{k=2}^K\Delta_k\left(1 + n_k^*(\tau_B) + \xi(\tau_B,\rho)\right)
    + \mathcal{X}(B)\sum_{k=2}^K\Delta_k + \frac{2\mu_1^r}{\mu_1^c},
\end{equation}
where $\mathcal{X}(B) $ is in $ \mathcal{O}\left(\frac{B}{\mu_{min}^c}e^{-0.5B\mu_{min}^c}\right)$.
\end{restatable}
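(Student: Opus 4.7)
The plan is to convert the random horizon $T_B$ induced by budget exhaustion into the deterministic horizon $\tau_B = \lfloor 2B/\mu_{min}^c\rfloor$, apply Theorem~\ref{th:suboptimal-plays} at that horizon, and finally account for the suboptimality of always selecting arm~$1$ compared to the (NP-hard) true optimum.

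First I would start from the regret definition in Section~\ref{sec:problem_definition} together with the remark that always playing arm~$1$ is within $2\mu_1^r/\mu_1^c$ of the optimal knapsack policy~\cite{xia_thompson_2015}. This gives
\begin{equation}
    \text{Regret} \leq \sum_{k=2}^{K}\mu_k^c \Delta_k\, \mathbb{E}[n_k(T_B)] \;+\; \frac{2\mu_1^r}{\mu_1^c},
\end{equation}
and since $\mu_k^c\in(0,1]$, the factor $\mu_k^c\Delta_k$ is upper bounded by $\Delta_k$. This already isolates the additive $2\mu_1^r/\mu_1^c$ tail of the claimed bound.

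Next I would invoke Lemma~4 of~\cite{xia_finite_2017}, which relates $\mathbb{E}[n_k(T_B)]$ to $\mathbb{E}[n_k(\tau_B)]$ through the deterministic horizon $\tau_B$. The key intuition is that $\tau_B$ is roughly twice the expected number of plays needed to deplete the budget (since every cost is at least $\mu_{min}^c$ in expectation), so a Hoeffding/Chernoff-style bound implies that $\Pr[T_B > \tau_B]$ decays exponentially in $B\mu_{min}^c$; on that rare event, at most $\mathcal{O}(B/\mu_{min}^c)$ additional plays of any arm are possible. Together this yields
\begin{equation}
    \mathbb{E}[n_k(T_B)] \;\leq\; \mathbb{E}[n_k(\tau_B)] \;+\; \mathcal{X}(B),
    \qquad \mathcal{X}(B)\in\mathcal{O}\!\left(\tfrac{B}{\mu_{min}^c}e^{-0.5 B\mu_{min}^c}\right).
\end{equation}
Substituting Theorem~\ref{th:suboptimal-plays} for $\mathbb{E}[n_k(\tau_B)]$, using $\mu_k^c\le 1$, and summing over $k=2,\dots,K$ delivers the three sum-over-$k$ terms of \eq{eq:regret-wucb}. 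The additive $2\mu_1^r/\mu_1^c$ from the first step completes the bound.

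The main obstacle is the clean reduction from the random stopping time $T_B$ to the deterministic $\tau_B$: one must verify that the hypotheses of Lemma~4 in~\cite{xia_finite_2017} are met by $\omega$-UCB (bounded iid rewards and costs, stopping on first budget depletion), and carefully justify the concentration argument that produces the particular exponential form of $\mathcal{X}(B)$. Once that reduction is established, the remainder is essentially bookkeeping: plugging in Theorem~\ref{th:suboptimal-plays}, bounding $\mu_k^c$, and collecting the knapsack correction.
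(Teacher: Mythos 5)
Your proposal is correct and follows essentially the same route as the paper: the paper's proof simply cites Lemma~4 of~\cite{xia_finite_2017} as a black box giving the policy-independent bound $\text{Regret} \leq \sum_{k=2}^K\Delta_k \mathbb{E}[n_k(\tau_B)] + \mathcal{X}(B)\sum_{k=2}^K\Delta_k + 2\mu_1^r/\mu_1^c$ and then substitutes Theorem~\ref{th:suboptimal-plays}. You merely unpack what that lemma does internally (the knapsack correction, the concentration-based reduction from the random stopping time $T_B$ to the deterministic horizon $\tau_B$, and the $\mu_k^c\leq 1$ bookkeeping), which is consistent with the paper's argument.
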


\begin{proof}
Lemma~4 of~\cite{xia_finite_2017} provides a policy-independent regret expression for Budgeted MAB policies: 
\begin{equation}\label{eq:lemma4-xia}
    \text{Regret} \leq \sum_{k=2}^K\Delta_k \mathbb{E}[n_k(\tau_B)] + \mathcal{X}(B)\sum_{k=2}^K\Delta_k + \frac{2\mu^r_1}{\mu_1^c},\quad \tau_B = \left\lfloor\frac{2B}{\min_{k\in[K]}\expectcost}\right\rfloor
\end{equation}
Substituting $\mathbb{E}[n_k(\tau_B)]$ in \eq{eq:lemma4-xia} with the result from Theorem~\ref{th:suboptimal-plays} completes the proof.
\end{proof}

% Last, we derive the asymptotic worst-case regret of $\omega$-UCB. When $0 < \rho < 1$, the regret grows by a fractional power. 
% When $\rho\geq 1$, in turn, it is logarithmic. 

The term $\xi(\tau, \rho)$ decreases, while $n_k^*(\tau)$ increases with $\rho$.
Further derivations show that for increasingly large budgets, $\xi(\tau,\rho)$ converges for $\rho>1$, grows logarithmic for $\rho=1$, and diverges on the order of $\mathcal{O}(B^{1-\rho})$ for $\rho < 1$; see Appendix~\ref{proof:worst-case-regret-complex} for the details. This results in the following asymptotic behavior:

\begin{restatable}[Asymptotic regret]{theorem}{thworstcaseregretcomplex}
\label{th:worst-case-regret-complex}
The regret of $\omega$-UCB is in
\begin{equation}
    \mathcal{O}\left(B^{1-\rho}\right) \textrm{ for } 0 < \rho < 1, \quad \textrm{and in}\quad \mathcal{O}(\log B) \textrm{ for } \rho \geq 1.
\end{equation}
\end{restatable}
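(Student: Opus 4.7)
The plan is to start from the non-asymptotic bound in Theorem~\ref{th:worst-case-regret} and identify the dominant term as $B\to\infty$. Because $\tau_B=\lfloor 2B/\min_k\mu_k^c\rfloor=\Theta(B)$, it suffices to express each summand on the right-hand side of \eq{eq:regret-wucb} in terms of $B$. The term $n_k^*(\tau_B)=\frac{8\rho\log\tau_B}{\delta_k^2}\max\{\ldots\}$ is immediately $\mathcal{O}(\log B)$, and $\mathcal{X}(B)\in\mathcal{O}(Be^{-0.5B\mu_{min}^c})$ is $o(1)$; after multiplication by the finite quantity $\sum_k\Delta_k$, both therefore contribute at most $\mathcal{O}(\log B)$ to the regret. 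The entire analysis consequently hinges on bounding $\xi(\tau_B,\rho)$.

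First I would rewrite $\xi(\tau,\rho)$ as a single-index sum: since the sum in its definition has exactly $\tau-K$ terms, distributing $(\tau-K)(2-\sqrt{1-\tau^{-\rho}})$ over the same index range gives
\begin{equation*}
\xi(\tau,\rho) = \sum_{t=K+1}^{\tau}\left[\left(1-\sqrt{1-\tau^{-\rho}}\right) + \left(1-\sqrt{1-t^{-\rho}}\right)\right].
\end{equation*}
The elementary identity $1-\sqrt{1-x}=x/(1+\sqrt{1-x})\leq x$ for $x\in[0,1]$ then yields
\begin{equation*}
\xi(\tau,\rho) \leq (\tau-K)\tau^{-\rho} + \sum_{t=K+1}^{\tau} t^{-\rho} \leq \tau^{1-\rho} + \sum_{t=K+1}^{\tau} t^{-\rho}.
\end{equation*}

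Next I would control the residual tail sum by integral comparison and split into three cases. For $\rho>1$, $\sum_{t\geq K+1} t^{-\rho}\leq \int_K^{\infty} x^{-\rho}\,dx <\infty$, so $\xi(\tau_B,\rho)=\mathcal{O}(1)$ and the $\log B$ arising from $n_k^*$ dominates. For $\rho=1$, $\sum_{t=K+1}^{\tau} t^{-1}=\mathcal{O}(\log\tau)$ and $\tau^{1-\rho}=1$, so $\xi(\tau_B,\rho)=\mathcal{O}(\log B)$, still $\mathcal{O}(\log B)$ overall. For $0<\rho<1$, $\sum_{t=K+1}^{\tau} t^{-\rho}\leq \int_K^{\tau} x^{-\rho}\,dx = \mathcal{O}(\tau^{1-\rho})$, which absorbs the separate $\tau^{1-\rho}$ summand and gives $\xi(\tau_B,\rho)=\mathcal{O}(B^{1-\rho})$; since $B^{1-\rho}$ strictly dominates $\log B$ in this range, the total regret is $\mathcal{O}(B^{1-\rho})$.

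No step is conceptually difficult; the main (but still routine) work is to track absolute constants carefully through the case split. The only mild subtlety is the behaviour at the boundary $\rho=1$, but since both $\rho=1$ and $\rho>1$ collapse to the same $\mathcal{O}(\log B)$ bound, no separate argument is required. The intuition is that the exploration penalty $\xi$, originating from the shrinking confidence level $\alpha(t)<1-\sqrt{1-t^{-\rho}}$, is summable when $\rho>1$, matches the logarithmic rate of the exploitation term $n_k^*$ at $\rho=1$, and dominates it polynomially when $\rho<1$.
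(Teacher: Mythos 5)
Your proposal is correct and follows essentially the same route as the paper: isolate $\xi(\tau_B,\rho)$ as the only non-trivially-logarithmic term, bound $1-\sqrt{1-x}$ by $x$ (the paper writes this as $\sqrt{1-t^{-\rho}}\geq 1-t^{-\rho}$), and finish with an integral comparison split on $\rho<1$, $\rho=1$, $\rho>1$. Your single-index rewriting of $\xi$ and the explicit bound $(\tau-K)(1-\sqrt{1-\tau^{-\rho}})\leq\tau^{1-\rho}$ are in fact slightly more self-contained than the paper's treatment, which merely asserts convergence of that boundary term, but the argument is the same in substance.
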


\begin{proof}
    See Appendix~\ref{proof:worst-case-regret-complex}.
\end{proof}

\section{Experimental Setup}
This section presents the experimental setup used to evaluate the policies. 
We introduce the MAB settings, followed by the configurations of $\omega$-UCB and its competitors. 
We conducted the experiments on a server with 32 cores, each running at 2.0 GHz, and 128 GB of RAM. 

\subsection{Budgeted MAB settings}

We use MAB settings based on synthetic and real data, which we describe separately.
Each setting comprises a specific combination of reward and cost distributions, and the number of arms $K$. 
See Table~\ref{tab:evaluation_settings} for a summary of the evaluation settings. 
  
\paragraph{Synthetic Data.} 
Previous studies on Budgeted Multi-Armed Bandits (MABs) have used synthetic settings with rewards and costs drawn from discrete (Bernoulli or Generalized Bernoulli with possible outcomes $\{0.0, 0.25, 0.5, 0.75, 1.0\}$) or continuous (Beta) distributions~\cite{xia_budgeted_2015,xia_finite_2017,watanabe_ucb-sc_2018,watanabe_kl-ucb-based_2017}. 
These studies typically generate parameters randomly within a given range~\cite{xia_finite_2017,xia_budgeted_2015,xia_thompson_2015,xia_budgeted_2016} and use 10 to 100 arms~\cite{xia_budgeted_2015, xia_thompson_2015, xia_finite_2017, xia_budgeted_2016, watanabe_ucb-sc_2018}. We follow this approach and set the parameter ranges to those used in related work. 

\paragraph{Social-media advertisement data.}
We also evaluate our policy in a social media advertisement scenario described in Example \ref{ex:adverstising}.
We use real-world data from~\cite{lemsalu_facebook_2017}. 
It contains information about different ads based on their target gender (female or male) and age category (30--34, 34--39, 40--44, 45--49), along with the number of displays and clicks, the total cost, and the number of purchases. 
We group the ads by target gender and age category, resulting in 19 ``advertisement campaigns'' (Budgeted MAB settings). Each campaign has between 2 and 93 ads (arms). We compute the expected rewards $\expectrew$ and costs $\expectcost$ of each ad as the average revenue per click and average cost per click, respectively. We model both discrete and continuous rewards and costs. 
For the discrete case, we sample rewards and costs from two Bernoulli distributions with expected values of $\expectrew$ and $\expectcost$, respectively. 
In the continuous case, we use a Beta distribution and sample the distribution parameters from a uniform distribution with a range of (0, 5). 
We then adjust one of the parameters to ensure that the expected values of rewards and costs match $\expectrew$ and $\expectcost$.

\begin{table}[htb]
\centering

\caption{Our evaluation settings}

\label{tab:evaluation_settings}
    \begin{tabular}{llcrll}
        \toprule
        Type & Distribution & Parameters & K & Used in & Id\\
        \midrule
        \multirow{9}{*}{Synthetic} &\multirow{3}{*}{Bernoulli} & \multirow{3}{*}{$\mathcal{U}(0,1)$}& 10& \cite{xia_thompson_2015, xia_finite_2017} & S-Br-10\\
        && & 50& \cite{xia_finite_2017} & S-Br-50\\
        && & 100& \cite{xia_budgeted_2015, xia_thompson_2015} & S-Br-100\\
        \cmidrule{2-6}
        &\multirow{3}{*}{\shortstack[l]{Generalized\\ Bernoulli}} & \multirow{3}{*}{$\mathcal{U}(0,1)$} & 10 & \cite{xia_thompson_2015,xia_budgeted_2016} & S-GBr-10\\
        && & 50 & \cite{xia_budgeted_2016} & S-GBr-50\\
        && & 100 & \cite{xia_thompson_2015} & S-GBr-100\\
        \cmidrule{2-6}
        &\multirow{3}{*}{Beta} & \multirow{3}{*}{$\mathcal{U}(0,5)$} & 10 & \cite{xia_finite_2017,xia_budgeted_2016} & S-Bt-10\\
        && & 50 & \cite{xia_finite_2017,xia_budgeted_2016} & S-Bt-50\\
        && & 100 & \cite{xia_budgeted_2015} & S-Bt-100\\
        \midrule
        \multirow{2}{*}{Facebook} & Bernoulli & given & $[2, 97]$& -- & FB-Br \\
        \cmidrule{2-6}
        & Beta & randomized & $[2, 97]$ & -- & FB-Bt\\
        \bottomrule
    \end{tabular}
\end{table}

\subsection{Budgeted MAB policies}
We test the performance of two variants of our policy: $\omega$-UCB and $\omega^*$-UCB.
The $\omega$-UCB variant uses a fixed value of $\eta=1$ for rewards and costs. 
The $\omega^*$-UCB uses $\eta_k^r=\eta_k^c=1$ as default but estimates their values from data once arm $k$ has been played sufficiently many times ($n_k(T) \geq 30$),
\begin{equation}
    \bar\eta_k=\frac{\bar\sigma_k^2}{(M-\bar\mu_k)(\bar\mu_k-m)},
\end{equation}
where bars refer to sample estimates as before.
We experiment with two values of the hyperparameter~$\rho$: $\rho=1$ and $\rho={1}/{4}$. The former is the minimum value for which we have proven logarithmic regret. The latter has performed well in our sensitivity study, as we will demonstrate in Section~\ref{sec:sensitivity-study}.

We compare the performance of our policy to several other state-of-the-art Budgeted MAB policies, including BTS, Budget-UCB, i-UCB, c-UCB, m-UCB, b-greedy, and UCB-SC+. 
We set the hyperparameters for each competitor to the values recommended in their respective papers. 
Appendix~\ref{app:related_work} features details about the policies and their hyperparameters.

\section{Results}

To observe the asymptotic behavior of the policies, we set the budget $B$ to $1.5\cdot 10^5$ times the minimum expected cost. We execute each policy until the available budget is depleted and report the average results over 100 independent repetitions with the repetition index as the seed for the random number generator. Since in each repetition of the experiment, we draw the expected cost $\expectcost$ and expected reward $\expectrew$ uniformly at random, we normalize the budget in our graphs. 
Section~\ref{sec:evaluation_competitors} compares the performance of our policy and its competitors. Section~\ref{sec:sensitivity-study} examines the sensitivity of $\omega$-UCB and $\omega^*$-UCB to the hyperparameter $\rho$. Note that some graphs (the ones in Figure~\ref{fig:regret_evaluation}) omit confidence intervals for better accessibility. Graphs with 95\% confidence intervals are available in Appendix~\ref{sec:evaluation_results_ci}.

\subsection{Performance of Budgeted MAB policies}\label{sec:evaluation_competitors}

\paragraph{Synthetic Bernoulli.} 
Figure~\ref{fig:synth_bernoulli} shows the regret of the policies for Bernoulli-distributed rewards and costs. Blue lines display results for $\omega$-UCB, while grey lines stand for other policies. We do not present $\omega^*$-UCB in this experiment since its results are almost identical to $\omega$-UCB.
$\omega$-UCB and BTS achieve logarithmic regret and demonstrate better asymptotic behavior than other methods. Although m-UCB, c-UCB, and i-UCB may outperform $\omega$-UCB with $\rho=1$ for small budgets, their regret grows rapidly as the budget increases, indicating poor asymptotic behavior. For $K=50$ and $K=100$, our policy has lower regret than BTS. BTS outperforms $\omega$-UCB with $\rho=1$ only on the 10-armed bandit. $\omega$-UCB with $\rho={1}/{4}$ outperforms all other policies on small and large budgets and regardless of $K$. Comparing $\rho=1/4$ with $\rho=1$, One can see that the curve for $\rho=1$ is linear (the x-axis is logarithmic), while the curve for $\rho=1/4$ is convex. We conclude that $\rho=1/4$ leads to smaller regret than 
$\rho=1$ for not too large budgets but that 
$\rho=1$ performs better asymptotically.

\paragraph{Synthetic Generalized Bernoulli and synthetic Beta.} 
Figure~\ref{fig:synth_multinomial} shows the regret of the policies for rewards and costs drawn from Generalized Bernoulli distributions and Figure~\ref{fig:synth_beta} for Beta distributed rewards and costs. 
Besides $\omega$-UCB, we also present the results for $\omega^*$-UCB which estimates $\eta_k^r$ and $\eta_k^c$ from the observed sample variance of rewards and costs.
$\omega$-UCB and $\omega^*$-UCB with $\rho={1}/{4}$ outperform their competitors, except for $K=100$ in the Beta bandit where m-UCB performs comparable. We further notice that BTS is not competitive in this evaluation setting, likely because the policy cannot account for the (often very small) variance of the sampled Beta distributions. Our $\omega^*$-UCB policy is advantageous in such cases.

\begin{figure*}
    \centering
    \begin{subfigure}[b]{\textwidth}
        \centering
        \includegraphics[width=\linewidth]{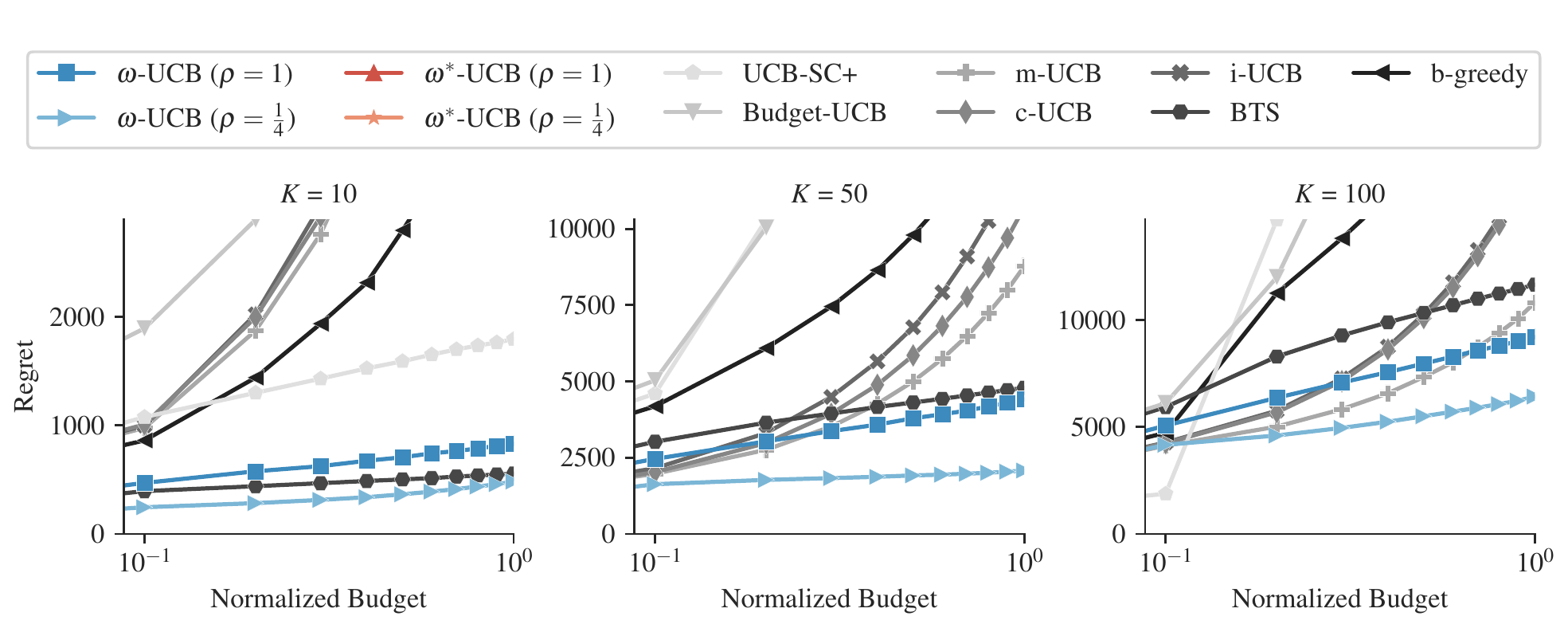}
        \caption{Settings S-Br-$\{10, 50, 100\}$}
        \label{fig:synth_bernoulli}
    \end{subfigure}
    \hfill
    \begin{subfigure}[b]{\textwidth}
        \centering
        \includegraphics[width=\linewidth]{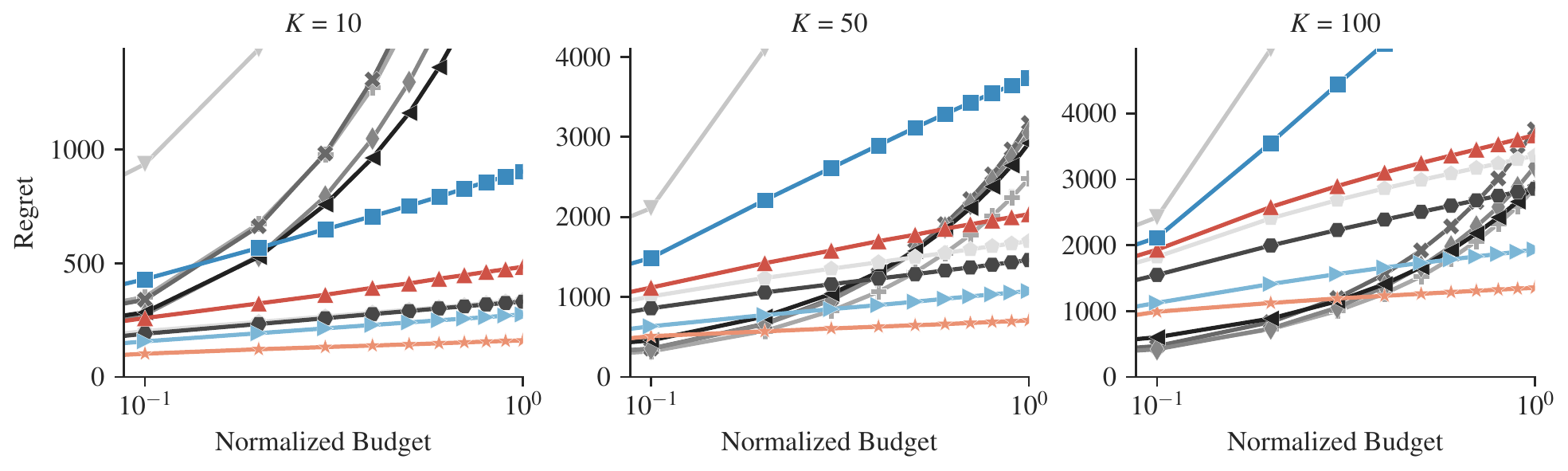}
        \caption{Settings S-GBr-$\{10, 50, 100\}$}
        \label{fig:synth_multinomial}
    \end{subfigure}
    \hfill
    \begin{subfigure}[b]{\textwidth}
        \centering
        \includegraphics[width=\linewidth]{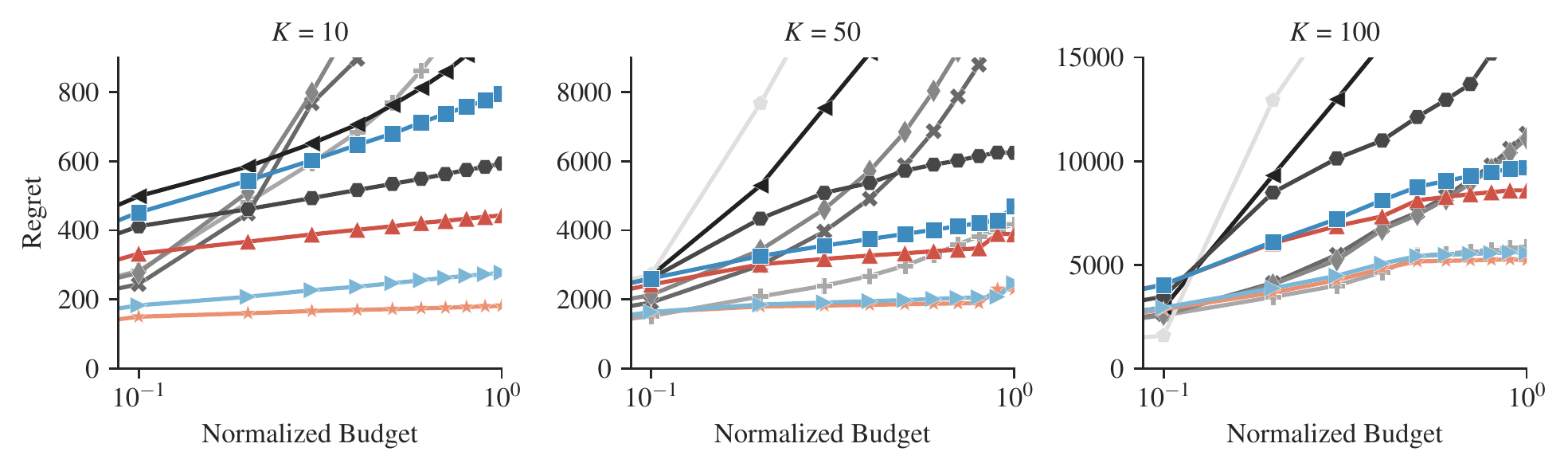}
        \caption{Settings S-Bt-$\{10, 50, 100\}$}
        \label{fig:synth_beta}
    \end{subfigure}
    \centering
    \begin{subfigure}[b]{.33\textwidth}
         \centering
        \includegraphics[width=\linewidth]{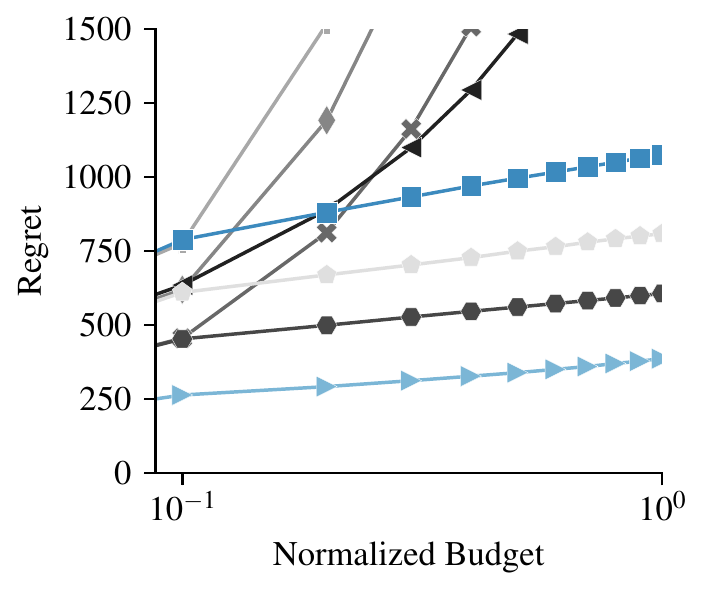}
        \caption{Setting FB-Br}
        \label{fig:facebook-bernoulli}
    \end{subfigure}
    \begin{subfigure}[b]{.33\textwidth}
         \centering
          \includegraphics[width=\linewidth]{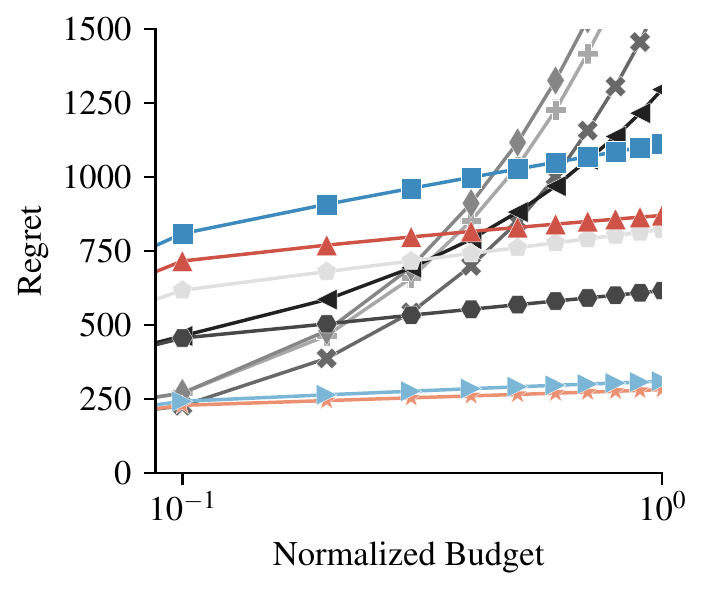}
         \caption{Setting FB-Bt}
         \label{fig:facebook-beta}
    \end{subfigure}
    \begin{subfigure}[b]{.32\textwidth}
         \centering
          \includegraphics[width=\linewidth]{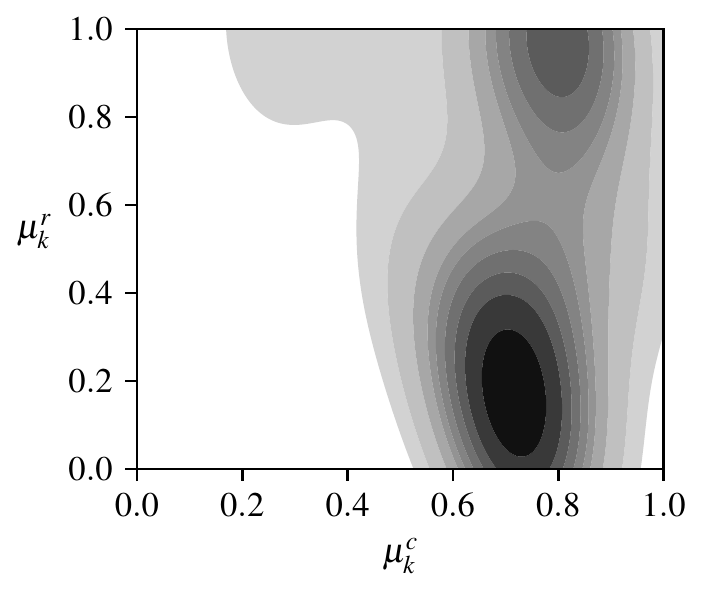}
         \caption{Example kde of $\expectrew,\expectcost$}
         \label{fig:example_kde}
    \end{subfigure}
    
    \caption{Evaluation results; refer to Figure~\ref{fig:regret_evaluation_ci} in Section~\ref{sec:evaluation_results_ci} for graphs with confidence intervals} 
\label{fig:regret_evaluation}
\end{figure*}

\paragraph{Social-media advertisement data.}

Figure~\ref{fig:facebook-bernoulli} and Figure~\ref{fig:facebook-beta} show the results of our study on social-media advertisement data~\cite{lemsalu_facebook_2017}. On this real world data set, our default choice of $\rho={1}/{4}$ outperforms all other competitors. $\rho=1/4$ also outperforms $\rho=1$ significantly, although both choices show good asymptotic behavior. Further, the advertisement data settings seem to be easier for some competitors (BTS, KL-UCB-SC+) and harder for others ({i,c,m}-UCB). 
The likely cause is that the distribution of expected rewards and costs between arms is non-uniform: costs are biased towards \numprint{1}, and rewards are biased towards the boundaries of $[0, 1]$, as the kde plot for the MAB with $K=33$ in~Figure~\ref{fig:example_kde} illustrates exemplary. Last, we observe that $\omega^*$-UCB has lower regret than $\omega$-UCB, although the effect is not as prominent as in the synthetic settings.

\subsection{Sensitivity study}\label{sec:sensitivity-study}

We investigate the performance difference between $\omega$-UCB and $\omega^*$-UCB, as well as the sensitivity of our policy with respect to the hyperparameter $\rho$. The results based on our synthetic settings (cf.~Table~\ref{tab:evaluation_settings}) are shown in Figure~\ref{fig:sensitivity_study}. $\omega$-UCB and $\omega^*$-UCB perform best with $\rho={1}/{4}$ when rewards and costs follow Bernoulli distributions. Both policies achieve comparable performance in this case. It appears that estimating $\eta$ (which is known to be 1 in Bernoulli bandits) does not result in a performance decrease of $\omega^*$-UCB compared to $\omega$-UCB.
Also, even though $\rho={1}/{8}$ works well for $\omega$-UCB when rewards and costs follow a generalized Bernoulli or Beta distribution, $\rho={1}/{4}$ remains a near-optimal choice for $\omega^*$-UCB. Based on these results, we recommend using $\omega^*$-UCB with $\rho={1}/{4}$ as a default. 

\begin{figure}
	\centering
	\includegraphics[width=\linewidth]{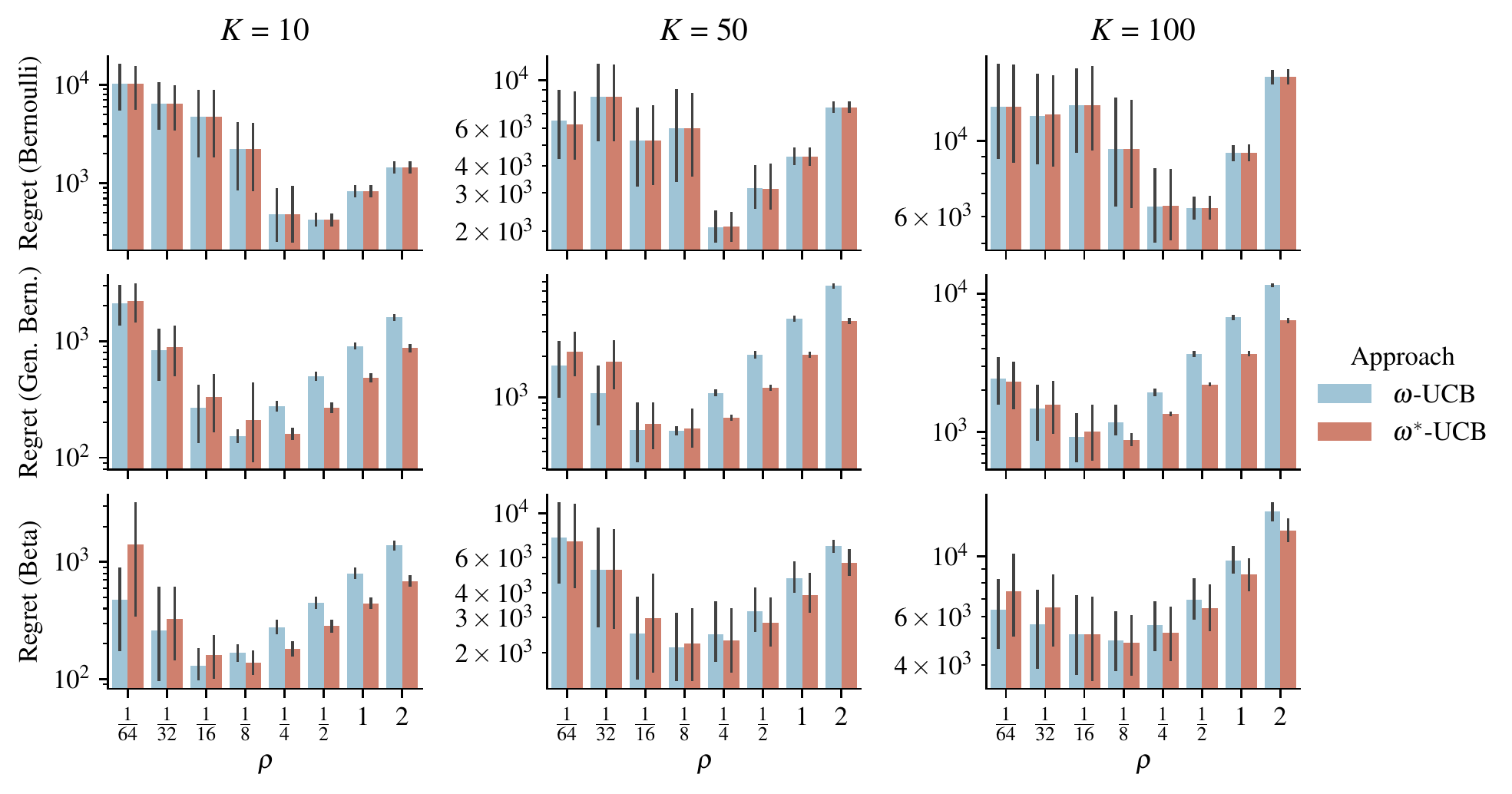}
 
	\caption{Sensitivity study showing the regret w.r.t.\ different choices of $\rho$}
	\label{fig:sensitivity_study}
\end{figure}

\section{Conclusions}

We have presented a new approach for Budgeted MABs called $\omega$-UCB. It combines UCB sampling with asymmetric confidence intervals to address issues of existing approaches. Our interval generalizes Wilson's score interval to arbitrary bounded random variables. An extension of our approach, $\omega^*$-UCB, tracks the variance of the reward and cost distributions on the fly to tighten the confidence intervals. This leads to even better performance when rewards or costs are continuous. Our analysis shows that $\omega$-UCB achieves logarithmic regret for $\rho \geq 1$, while $\rho = {1}/{4}$ performed best in our experiments. In the future, we plan to extend our approach to the non-stationary setting where the reward and cost distributions change over time. This is particularly relevant in scenarios like online advertising, where companies want to promote their products and services continuously.

\clearpage
\begin{ack}
This work was supported by the DFG Research Training Group 2153: ``Energy Status Data --- Informatics Methods for its Collection, Analysis and Exploitation''.
\end{ack}
\small
\bibliographystyle{plain}
\bibliography{references.bib}
\normalsize
\appendix
\section{Proofs and derivations}
\subsection{Proof of \texorpdfstring{Theorem~\ref{th:wilson-score}}{Theorem 1}}\label{proof:wilson-score}

Recall Theorem~\ref{th:wilson-score}:
\thwilsonscore*

\begin{proof}
Using the central limit theorem and Bhatia-Davis inequality, we follow similar steps as~\cite{wilson_probable_1927}. We first handle the case where $\mu\in(m,M)$. We then address the edge cases $\mu=m$ and $\mu=M$.

\paragraph{Case $\mu\in(m, M)$.}
The central limit theorem states that for a large enough sample size, $\bar\mu$ approximately follows a normal distribution with mean $\mu$ and variance $\frac{\sigma^2}{n}$. I.e., 
\begin{equation}
\bar\mu \sim \mathcal{N}
\left(\mu,\sqrt{\frac{\sigma^2}{n}}\right)\ \Longleftrightarrow\
    \frac{\bar\mu - \mu}{\sqrt{\frac{\sigma^2}{n}}} \sim \mathcal{N}(0,1).
\end{equation}

Therefore, $\bar\mu$ likely falls into an interval that is centered around $\mu$ and scaled by $\sigma$. The value $z$ is the number of standard deviations such that $\bar\mu$ falls out of the corresponding confidence interval with a probability of $\alpha$.
\begin{equation}\label{eq:probability_sample_mean}
    \Pr[\bar\mu \not\in \left[\mu - \frac{\sigma}{\sqrt{n}}z, \mu + \frac{\sigma}{\sqrt{n}}z\right]] = \alpha 
\end{equation}

Next, we apply the Bhatia-Davis inequality~\cite{bhatia_BetterBoundVariance_2000} to express $\sigma$ as a function of $\mu$. It states that $\sigma^2 \leq (M-\mu)(\mu-m)$. Hence there exists a factor $\eta\in [0,1]$ such that
\begin{equation}\label{eq:eta}
    \sigma^2 = \eta(M-\mu)(\mu-m).
\end{equation}

This gives us an expression for the interval bounds in \eq{eq:probability_sample_mean} that is quadratic w.r.t. $\mu$: 

\begin{equation}\label{eq:wilson-foundation}
    (\bar\mu-\mu)^2 = \frac{\sigma^2}{n}z^2 = \frac{\eta(M-\mu)(\mu-m)}{n}z^2
\end{equation}

Solving \eq{eq:wilson-foundation} for $\mu$ yields the endpoints $\omega_-(\alpha)$ and $\omega_+(\alpha)$ of our confidence interval: 
\begin{equation}
    \Pr[\mu\not\in \left[\omega_-(\alpha),\omega_+(\alpha)\right]] = \alpha
\end{equation}
with
\begin{equation}
\omega_-(\alpha), \omega_+(\alpha) = \frac{B}{2A} \pm \sqrt{\frac{B^2}{4A^2} - \frac{C}{A}}
\end{equation}
and 
\begin{equation}
A=n+z^2\eta, \quad B=2n\bar\mu + z^2\eta (M + m), \quad C=n\bar\mu^2 + z^2\eta Mm.
\end{equation}

\paragraph{Cases $\mu=m$ and $\mu=M$.} For $\mu=m$ (the case for $\mu=M$ is analogous), the probability that $\mu$ is not in the confidence interval $[\omega_-(\alpha),\omega_+(\alpha)]$ is zero, which is less than $\alpha$. Additionally, we have $\sigma^2 = (M-\mu)(\mu-m) = 0$, which implies that \eq{eq:eta} holds for any choice of $\eta$. However, since $\mu$ is an unknown quantity, we can never be certain that $\mu=m$ based on some sample from $X$. In the worst-case scenario, $X$ is a random variable that takes only extreme values, i.e., $X\in\{m, M\}$, with $\mu$ greater than and approximately equal to $m$. In this case, $\eta=1$ by definition. Hence, we define $\eta=1$ for $\mu\in{m,M}$. Combining the special case that $\mu\in\{m,M\}$ with the result from the previous paragraph gives Theorem~\ref{th:wilson-score}.

Our interval is a generalization of Wilson's score interval for Binomial proportions~\cite{wilson_probable_1927}. One can recover its original formulation by setting $\eta=1, m=0, M=1$. We refer to Section~\ref{sec:omega-ucb} 
for a more in-depth discussion of our result. 
\end{proof} 

\subsection{Proof of \texorpdfstring{Theorem~\ref{th:adaptive-conf}}{3}}\label{proof:theorem-adaptive-conf}

Recall Theorem~\ref{th:adaptive-conf}:

\thadaptiveconf*

\begin{proof}
We start from Theorem~\ref{th:wilson-score} and equate the confidence level $1 - \alpha$ of the individual reward and cost distributions to the number of standard deviations $z$. This involves the cumulative density function (cdf) of the standard normal distribution.
We then replace the cdf with an approximation that has a closed form solution for $z$. Our choice of $z$ cancels out the exponential term in this approximation, similar to the UCB1-policy~\cite{auer_finite-time_2002}. Last, we apply Theorem~\ref{cor:deviation-reward-cost} to obtain the final result.

\paragraph{Step 1.} We relate the confidence level $1 - \alpha(t)$ at time $t$ to the cumulative density function of the standard normal distribution:
\begin{equation}
    \label{eq:cdf}
    1 - \frac{\alpha(t)}{2} = \frac{1}{2}\left(1 + \text{erf}\left(\frac{z}{\sqrt{2}}\right)\right)
\end{equation}

Solving for $\alpha(t)$ yields:
\begin{equation}
    \alpha(t) = 1 - \mathrm{erf}\left(\frac{z}{\sqrt{2}}\right)
\end{equation}

\paragraph{Step 2.}
We now replace the error function $\text{erf}\left(\frac{z}{\sqrt{2}}\right)$ in the equation above with a series expansion based on B\"urmann's theorem~\cite{schopf_burmanns_2014,whittaker_CourseModernAnalysis_2020}; we summarize all but the first addend in a remainder term $\gamma\left(\frac{z}{\sqrt{2}}\right) > 0$. This term has a maximum of $\gamma(0.71) \approx 0.0554$ and approaches 0 for larger $z$:
\begin{equation}
    \alpha(t) = 1 - \left(\sqrt{1-\exp{-\frac{z^2}{2}}} + \gamma\left(\frac{z}{\sqrt{2}}\right)\right)
\end{equation}

Omitting the $\gamma$-term gives an upper bound for $\alpha(t)$:
\begin{equation}\label{eq:alpha-exp}
    \alpha(t) < 1 - \sqrt{1-\exp{-\frac{z^2}{2}}}
\end{equation}

\paragraph{Step 3.}
Next, we choose $z$ as a function of $\log t$ and $\rho>0$, $z_\rho(t)=\sqrt{2\rho\log t}$. This results in a time-increasing confidence level $1 - \alpha(t)$:
\begin{equation}\label{eq:alpha_t}
    \Pr[\mu \not\in \left[\omega_-(\alpha(t)),\omega_+(\alpha(t))\right]] \leq \alpha(t) \text{ with } \alpha(t) < 1 - \sqrt{1-t^{-\rho}}
\end{equation}

\paragraph{Step 4.}
Applying Theorem~\ref{cor:deviation-reward-cost} gives
\begin{equation}
    \Pr[\frac{\expectrew}{\expectcost} > \Omega_k(\alpha, t)] \leq \alpha(t) \text{ with } \alpha(t) < 1 - \sqrt{1-t^{-\rho}}.
\end{equation}

The complementary event, $\Omega_k(\alpha, t) \geq \expectrew/\expectcost$, holds with a probability of at least $1 - \alpha(t)$,
\begin{equation}
    \Pr[\Omega_k(\alpha, t) \geq \frac{\expectrew}{\expectcost}] \geq 1 - \alpha(t) \text{ with } \alpha(t) < 1 - \sqrt{1-t^{-\rho}},
\end{equation}
which is the result given in Theorem~\ref{th:adaptive-conf}.

\end{proof}

\subsection{Proof of \texorpdfstring{Theorem~\ref{th:suboptimal-plays}}{Theorem 4}}\label{proof:th_suboptimal-plays}

Recall Theorem~\ref{th:suboptimal-plays}:

\thsuboptimalplays*

\begin{proof}
The proof starts with a general expression for the number of plays of a suboptimal arm $k>1$, where $\mathds{1}\{\cdot\}$ denotes the indicator function.

\begin{equation}\label{eq:decomposition-start}
    n_k(\tau) \leq 1 + \sumtau \mathds{1}\left\{ \Omega_k(t) \geq \Omega_j(t), \forall j\neq i \right\} \leq 1 + \sumtau \mathds{1}\left\{ \Omega_k(t) \geq \Omega_1(t) \right\}
\end{equation}

This is upper-bounded by
\begin{alignat}{1}\label{eq:decomposition-2}
    n_k(\tau) &\leq 1 + \sumtau \bigg[\mathds{1}\left\{ \Omega_k(t) \geq \Omega_1(t), \Omega_1(t) < \frac{\mu_1^r}{\mu_1^c} \right\} + \mathds{1}\left\{ \Omega_k(t) \geq \Omega_1(t), \Omega_1(t) \geq \frac{\mu_1^r}{\mu_1^c}\right \}\bigg] \\
    &\leq 1 + \sumtau \bigg[ \mathds{1}\left\{ \Omega_1(t) < \frac{\mu_1^r}{\mu_1^c} \right\} + \mathds{1}\left\{ \Omega_k(t) \geq \frac{\mu_1^r}{\mu_1^c}\right\} \bigg]
\end{alignat}

The expected number of plays $\mathbb{E}[n_k(\tau)]$ is given by the probabilities of the individual events: 

\begin{equation}\label{eq:number_of_plays_with_A_B}
    \mathbb{E}[n_k(\tau)] \leq 1 + \sumtau \bigg[\underbrace{\Pr\left\{ \Omega_1(t) < \frac{\mu_1^r}{\mu_1^c} \right\}}_{\Pr[A]} + \underbrace{\Pr\left\{ \Omega_k(t) \geq \frac{\mu_1^r}{\mu_1^c}\right\}}_{\Pr[B]} \bigg].
\end{equation}

We now evaluate the sum in the equation above.

\paragraph{Sum of $\Pr[A]$.}
We apply Theorem~\ref{cor:deviation-reward-cost}:
\begin{equation}
\label{eq:A}
    \sumtau \Pr[A] < \sumtau \left[1 - \sqrt{1-t^{-\rho}}\right] = (\tau-K)-\sumtau \sqrt{1-t^{-\rho}}
\end{equation}

\paragraph{Sum of $\Pr[B]$.} For this step, let us first introduce a helpful lemma: Lemma~\ref{lemma:t_k_star} bounds $\Pr[\Omega_k(t)\geq \mu_1^r/\mu_1^c]$ after a minimum number of plays $n_k^*(\tau)$, which grows logarithmic with $\tau$.

\begin{restatable}[]{lemma}{lemmatistar}
\label{lemma:t_k_star}
Define $\delta_k=\frac{\Delta_k}{\Delta_k + 1/\expectcost}$ and $n_k^*(\tau)$ as follows:

\begin{equation}
n_k^*(\tau) = \frac{8\rho\log \tau}{\delta_k^2}\max\left\{ \frac{\eta_k^r\expectrew}{1-\expectrew}, \frac{\eta_k^c(1-\expectcost)}{\expectcost} \right\}
\end{equation}

The following inequality holds whenever $n_k(t)\geq n_k^*(\tau)$:

\begin{equation}\label{eq:lemma_t_k_star}
   \Pr[\Omega_k(t) \geq \frac{\mu_1^r}{\mu_1^c}] < 1 - \sqrt{1-\tau^{-\rho}}
\end{equation}
\end{restatable}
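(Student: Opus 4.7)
The plan is to decompose the bad event $\{\Omega_k(t) \geq \mu_1^r/\mu_1^c\}$ into a reward-deviation event and a cost-deviation event, bound each via the asymmetric confidence interval of Theorem~\ref{th:wilson-score} once the sample size exceeds $n_k^*(\tau)$, and combine the two by a union bound with the adaptive confidence schedule of Theorem~\ref{th:adaptive-conf}.

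The key algebraic ingredient is the identity
\begin{equation}
\frac{\mu_k^r + \delta_k(1-\mu_k^r)}{\mu_k^c(1-\delta_k)} \;=\; \frac{\mu_k^r}{\mu_k^c} + \Delta_k \;=\; \frac{\mu_1^r}{\mu_1^c},
\end{equation}
which is easily verified by plugging in $\delta_k=\Delta_k/(\Delta_k+1/\mu_k^c)$. Consequently,
\begin{equation}
\{\Omega_k(t) \geq \mu_1^r/\mu_1^c\} \;\subseteq\; \underbrace{\{\omega_{k+}^r(t) > \mu_k^r + \delta_k(1-\mu_k^r)\}}_{E_r} \;\cup\; \underbrace{\{\omega_{k-}^c(t) < \mu_k^c(1-\delta_k)\}}_{E_c}.
\end{equation}
The asymmetric choice of additive threshold $\delta_k(1-\mu_k^r)$ for the reward and multiplicative factor $(1-\delta_k)$ for the cost is precisely what makes the reciprocal factors $\mu_k^r/(1-\mu_k^r)$ and $(1-\mu_k^c)/\mu_k^c$ emerge inside the $\max$ of $n_k^*(\tau)$.

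Next, I would invert the quadratic that defines $\omega_{k\pm}^r$ in Theorem~\ref{th:wilson-score} and use the Bhatia--Davis bound $\sigma_r^2 \leq \eta_k^r \mu_k^r(1-\mu_k^r)$ to show that the Wilson-type half-width in the upward direction is of order $z_\rho(t)\sqrt{\eta_k^r \mu_k^r(1-\mu_k^r)/n_k(t)}$. Requiring both this half-width and the concentration gap $|\bar\mu_k^r - \mu_k^r|$ to each stay below $\delta_k(1-\mu_k^r)/2$ yields, after substituting $z_\rho^2(\tau)=2\rho\log\tau$, the threshold $n_k(t) \geq 8\rho\log\tau \cdot \eta_k^r \mu_k^r/[\delta_k^2(1-\mu_k^r)]$. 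The symmetric calculation for $E_c$ produces $8\rho\log\tau \cdot \eta_k^c(1-\mu_k^c)/[\delta_k^2 \mu_k^c]$, and the stated $n_k^*(\tau)$ is the maximum of the two. Once $n_k(t)\geq n_k^*(\tau)$, each of $E_r$ and $E_c$ reduces to a one-sided coverage failure of the corresponding asymmetric interval at confidence level $1-\tfrac{1}{2}(1-\sqrt{1-\tau^{-\rho}})$, which by the one-sided form of Theorem~\ref{th:adaptive-conf} happens with probability at most $\tfrac{1}{2}(1-\sqrt{1-\tau^{-\rho}})$. A union bound closes the argument.

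The main obstacle lies in step two: the clean inversion of the Wilson-type quadratic and the careful tracking of constants so that splitting the total deviation between a concentration gap and a confidence-interval half-width delivers exactly the factor~$8$, with matching reward- and cost-side conditions under the \emph{same} $\delta_k$. A minor subtlety is that $z_\rho(t)$ depends on $t$, not $\tau$, but the monotonicity $z_\rho(t) \leq z_\rho(\tau)$ for $t\leq\tau$ makes the substitution harmless.
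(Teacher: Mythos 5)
Your proposal is correct and follows essentially the same route as the paper: the same identity $\frac{\mu_k^r+\delta_k(1-\mu_k^r)}{\mu_k^c(1-\delta_k)}=\frac{\mu_1^r}{\mu_1^c}$, the same union-bound decomposition into a reward event and a cost event, the same Bhatia--Davis/Wilson half-width of order $z_\rho(t)\sqrt{\eta_k\mu_k(1-\mu_k)/n_k(t)}$, and the same even split of the deviation budget (the paper's $\kappa=1/2$), which is exactly where the factor $8$ and the two terms inside the $\max$ come from.
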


Appendix~\ref{proof:lemma_t_k_star} contains the proof of Lemma~\ref{lemma:t_k_star}. See Appendix~\ref{proof:delta-gap} for the derivation of $\delta_k$.

The lemma allows to decompose $\sumtau\Pr[B]$ into ``initial plays'' ($n_k(t) < n_k^*(\tau)$) and ``later plays'' ($n_k(t) \geq n_k^*(\tau)$):
\begin{equation}
    \sumtau\Pr[B] = \sumtau\Pr\left\{ \Omega_k(t) \geq \frac{\mu_1^r}{\mu_1^c}\right\} = n_k^*(\tau) + \sumtau \Pr\left\{ \Omega_k(t) \geq \frac{\mu_1^r}{\mu_1^c}, n_k(\tau) \geq n_k^*(\tau)\right\}
\end{equation}

We now apply Lemma~\ref{lemma:t_k_star} to evaluate the sum in above equation,
\begin{equation}
\sumtau\Pr[B] \leq n_k^*(\tau) + \sumtau\left[ 1-\sqrt{1-\tau^{-\rho}} \right] = n_k^*(\tau) + (\tau - K)\left(1 - \sqrt{1-\tau^{-\rho}}\right). \label{eq:B}
\end{equation}

Inserting the results of \eq{eq:A} and \eq{eq:B} in \eq{eq:number_of_plays_with_A_B} yields a bound on $\mathbb{E}[n_k(\tau)]$:
\begin{gather}
    \mathbb{E}[n_k(\tau)] \leq 1 + n_k^*(\tau) + \underbrace{\left(\tau-K\right)\left(2-\sqrt{1-\tau^{-\rho}}\right) - \sumtau\sqrt{1-t^{-\rho}}}_{\xi(\tau, \rho)}
\end{gather}
\end{proof}

\subsection{Derivation of Proportional \texorpdfstring{$\delta$}{delta}-gap}\label{proof:delta-gap}

Let $\delta_k$ be the proportional $\delta$-gap of arm $k$. It measures how much one must increase $\expectrew$ and decrease $\expectcost$ in order to make arm $k$ have the same reward-cost ratio as arm 1, or in other words, to bridge the suboptimality gap $\Delta_k$ between arm $k$ and arm 1. The $\delta$-gap is proportional to the possible increase in rewards (decrease in costs) without violating their range $[0,1]$. We start our derivation with \eq{eq:delta-gap-der-1}, which states this mathematically.
\begin{equation}\label{eq:delta-gap-der-1}
    \frac{\expectrew + \delta_k(1-\expectrew)}{\expectcost - \delta_k\expectcost} = \frac{\mu_1^r}{\mu_1^c}
\end{equation}

Rearranging the equation yields 
\begin{equation}
    \frac{\mu_1^r}{\mu_1^c} - \frac{\expectrew}{\expectcost} = \delta_k\left( \frac{\mu_1^r}{\mu_1^c} - \frac{\expectrew}{\expectcost} + \frac{1}{\expectcost} \right).
\end{equation}

Now, recall the definition of an arm's suboptimality, $\Delta_k = \frac{\mu_1^r}{\mu_1^c} - \frac{\expectrew}{\expectcost}$, and solve for $\delta_k$:
\begin{equation}
     \delta_k = \frac{\Delta_k}{\Delta_k + \frac{1}{\expectcost}}
\end{equation}

\subsection{Proof of \texorpdfstring{Lemma~\ref{lemma:t_k_star}}{Lemma 1}}\label{proof:lemma_t_k_star}

Recall Lemma~\ref{lemma:t_k_star}:
\lemmatistar*

\begin{proof}
The proof is structured in four steps. First, we derive an expression for the maximum deviation between the observed sample mean and the unknown expected value of an arm's rewards and costs that we use later on. Second, we decompose the probability $\Pr[\Omega_k(t)\geq \mu_1^r/\mu_1^c]$. Third, we evaluate the decomposed probabilities for cases where $n_k(t)$ is sufficiently large, that is, $n_k(t) \geq n_k^*(\tau)$. Finally, we recombine the decomposed probabilities to obtain the final result.

\paragraph{Deviation between sample mean and expected value.} Let $\bar \mu_k(t)$ be the sample mean and $\mu_k$ the expected value of arm $k$'s rewards or costs at time $t$. To quantify the deviation between mean $\bar \mu_k(t)$ and $\mu_k$ we start from \eq{eq:wilson-foundation} (central limit theorem) and set $[m,M]=[0,1]$ and $n=n_k(t)$):
\begin{equation}
\label{eq:epsilon}
    (\bar \mu_k(t)-\mu_k)^2 \leq \frac{\eta_k\mu_k(1-\mu_k)}{n_k(t)}z^2
\end{equation}

The above inequality holds with the same probability as our confidence interval since it is the basis of the confidence interval derivation.\footnote{This observation is commonly known as ``interval equality principle''.}
This allows us to bound the deviation between sample mean and expected value, denoted $\varepsilon_{k}(t)$, for our choice of $z_\rho(t)=\sqrt{2\rho\log t}$:
\begin{equation}
\Pr[\lvert \bar\mu_k(t) - \mu_k\rvert > \varepsilon_{k}(t)] \leq \alpha(t) \text{ with } \varepsilon_{k}(t) = \sqrt{\frac{2\eta_k\mu_k(1-\mu_k)\rho\log t}{n_k(t)}} \text{ and } \alpha(t) < 1 - \sqrt{1-t^{-\rho}}
    \label{eq:epsilon3}
\end{equation}

Whenever we refer to $\varepsilon_{k}(t)$ w.r.t. rewards or costs we use the notations $\varepsilon_{k}^r(t)$ and $\varepsilon_{k}^c(t)$.

\paragraph{Decomposition of $\Pr[\Omega_k(t) \geq {\mu_1^r}/{\mu_1^c}]$.}

Next, we decompose the probability that $\Omega_k(t)\geq {\mu_1^r}/{\mu_1^c}$. The decomposition is analogous to the one in the proof of Theorem~\ref{cor:deviation-reward-cost} and thus omitted here: 
\begin{align}
    \Pr[\Omega_k(t) \geq \frac{\mu_1^r}{\mu_1^c}] &= \Pr[\rcratio \geq \frac{\mu_1^r}{\mu_1^c}] = \Pr[\rcratio \geq \frac{\expectrew + (1-\expectrew)\delta_k}{\expectcost - \expectcost\delta_k}] \label{eq:decomposition_Omega_geq_rcbest} \\ 
    &\leq \Pr[\omegaplus \geq \expectrew + (1-\expectrew)\delta_k] + \Pr[\omegaminus \leq \expectcost - \expectcost\delta_k] \label{eq:decomposition_Omega_geq_rcbest_final}
\end{align}

For the next step, note that if $\expectrew \leq \omega_{k+}^r(t)$ (and $\expectcost \geq \omega_{k-}^c(t)$), we have that $\expectrew - \avgreward \leq \varepsilon^r_{k}(t)$ (and $\avgcost - \expectcost \leq \varepsilon_{k}^c(t)$). This allows us to rewrite the terms in \eq{eq:decomposition_Omega_geq_rcbest_final} as follows:
\begin{equation}\label{eq:omega_plus_over_mu_1_r}
\begin{split}
    \Pr[\omegaplus \geq \expectrew + (1-\expectrew)\delta_k] &= \Pr[\expectrew + (1-\expectrew)\delta_k - \avgreward \leq \varepsilon_{k}^r(t)]\\
    &= \Pr[\avgreward-\expectrew \geq (1-\expectrew)\delta_k - \varepsilon_{k}^r(t)]
\end{split}
\end{equation}
\begin{equation}\label{eq:omega_plus_over_mu_1_c}
\begin{split}
    \Pr[\omegaminus \leq \expectcost - \expectcost\delta_k] &= \Pr[\avgcost - (\expectcost - \expectcost\delta_k) \leq \varepsilon_{k}^c(t)]\\
    &=\Pr[\expectcost - \avgcost \geq \expectcost\delta_k - \varepsilon_{k}^c(t)]
\end{split}
\end{equation}

In the next two paragraphs, we evaluate \eq{eq:omega_plus_over_mu_1_r} (confidence bound of rewards) and \eq{eq:omega_plus_over_mu_1_c} (confidence  bound of costs). We combine both results afterwards.
\paragraph{Evaluation of \eq{eq:omega_plus_over_mu_1_r} for $n_k(t) \geq n_k^{*,r}(\tau)$.}
We now consider the cases in which the number of times arm $k$ was played is at least logarithmic w.r.t. $
\tau$, i.e., $n_k(t) \geq n_k^{*,r}(\tau)$ with
\begin{equation}
    n_k^{*,r}(\tau) = \frac{2\rho\log \tau}{\delta_k^2(1-\kappa)^2} \frac{\eta_k^r\expectrew}{1-\expectrew}, \quad \text{for any } \kappa \in (0,1).
\end{equation}

In those cases, $\varepsilon_{k}^r(t) \leq (1-\kappa)\delta_k(1-\expectrew)$. One can verify this by inserting $n_k^{*,r}(\tau)$ in the definition of $\varepsilon_k^r(t)$, cf.~\eq{eq:epsilon3}. This gives the following inequality for the right side of \eq{eq:omega_plus_over_mu_1_r}:
\begin{equation}\label{eq:proof_lemma_prob_epsilon_trick}
    \Pr[\omegaplus \geq \expectrew + (1-\expectrew)\delta_k] \leq \Pr[\avgreward - \expectrew \geq \kappa(1-\expectrew)\delta_k]
\end{equation}

Last, we compute the number of standard deviations $z^*$ that corresponds to a deviation between $\avgreward$ and $\expectrew$ of at maximum $\kappa(1-\expectrew)\delta_k$ based on \eq{eq:epsilon}. In particular, we solve the right-most inequality in the expression below: 
\begin{equation}
(\avgreward-\expectrew)^2 \leq \frac{\eta_k^r\expectrew(1-\expectrew)}{n_k(\tau)}{z^*}^2 \leq
\frac{\eta_k^r\expectrew(1-\expectrew)}{n_k^{*,r}(\tau)}{z^*}^2 \leq (\kappa(1-\expectrew)\delta_k)^2
\end{equation}

With our choice of $n_k^{*,r}(\tau)$, this yields $z^* = \left(2\rho\log \tau \kappa^2 (1-\kappa)^{-2}\right)^\frac{1}{2}$. Inserting $z^*$ in \eq{eq:alpha-exp} (upper bound for $\alpha(t)$) results in the following bound: 
\begin{equation}\label{eq:omega_plus_over_mu_1_r_final}
     \Pr[\omegaplus \geq \expectrew + (1-\expectrew)\delta_k] < \frac{1}{2}\left(1 - \sqrt{1 - \tau^{-\frac{\kappa^2\rho}{(1-\kappa)^2}}}\right)
\end{equation}

\paragraph{Evaluation of \eq{eq:omega_plus_over_mu_1_c} for $n_k(t) > n_k^{*,c}(\tau)$.}
Again, we consider the cases in which the number of times arm $k$ was played is at least logarithmic in $
\tau$, i.e., $n_k(t) \geq n_k^{*,c}(\tau)$ with
\begin{equation}
    n_k^{*,c}(\tau) = \frac{2\rho\log \tau}{\delta_k^2(1-\kappa)^2} \frac{\eta_k^c(1-\expectcost)}{\expectcost}, \quad \kappa \in (0,1).
\end{equation}

In those cases, $\varepsilon_{k}^c(t) \leq (1-\kappa)\delta_k\expectcost$. Following analogous steps as in the previous paragraph yields \eq{eq:omega_plus_over_mu_1_c_final}:
\begin{equation}\label{eq:omega_plus_over_mu_1_c_final}
     \Pr[\omegaminus \leq \expectcost - \expectcost\delta_k] < \frac{1}{2}\left(1 - \sqrt{1 - \tau^{-\frac{(1-\kappa)^2\rho}{\kappa^2}}}\right)
\end{equation}

\paragraph{Obtaining the final result.} With the results in \eq{eq:omega_plus_over_mu_1_r_final} and \eq{eq:omega_plus_over_mu_1_c_final} we can finally evaluate \eq{eq:decomposition_Omega_geq_rcbest}. A choice of $\kappa=0.5$ and 
\begin{equation}
n_k^*(\tau) = \frac{8\rho\log \tau}{\delta_k^2}\max\left\{ \frac{\eta_k^r\expectrew}{1-\expectrew}, \frac{\eta_k^c(1-\expectcost)}{\expectcost} \right\}
\end{equation}
yields the bound given in Lemma~\ref{lemma:t_k_star}:
\begin{equation}
    \Pr[\Omega_k(t) \geq \frac{\mu_1^r}{\mu_1^c}] < 1 - \sqrt{1 - \tau^{-\rho}}, \quad n_k(t) \geq n_k^*(\tau)
\end{equation}
\end{proof}

\subsection{Proof of \texorpdfstring{Theorem~\ref{th:worst-case-regret-complex}}{Theorem 6}}\label{proof:worst-case-regret-complex}

Recall Theorem~\ref{th:worst-case-regret-complex}:
\thworstcaseregretcomplex*

\begin{proof}
First note that the latter two terms and $n_k^*(\tau_B)$ in \eq{eq:regret-wucb} are in $\mathcal{O}(\log B)$~\cite{xia_finite_2017}.
Next we show that $\xi(\tau_B,\rho)$ is in $\mathcal{O}(\log B)$ for $\rho \geq 1$ and in $\mathcal{O}(B^{1-\rho})$ for $0 < \rho < 1$. 

We exploit two inequalities in our proof; the latter is based on the integral test for convergence and holds for continuous, positive, decreasing functions. 
\begin{equation} \label{eq:inequality-roots}  
    \sqrt{1-t^{-\rho}} \geq 1 - t^{-\rho}, \quad t\geq 1,\rho>0
\end{equation}
\begin{equation}\label{eq:inequality-integral}
    \sum_{t=K+1}^{\tau_B} t^{-\rho} \leq (K+1)^{-\rho} + \int_{t=K+1}^{\tau_B} t^{-\rho}\ dt
\end{equation}

We use \eq{eq:inequality-roots} to obtain an integrable expression for the sum in $\xi(\tau_B,\rho)$. We replace the sum with an integral-based upper bound as in \eq{eq:inequality-integral}:
\begin{align}
    \xi(\tau_B,\rho) &= \left(\tau_B-K\right)\left(2-\sqrt{1-\tau_B^{-\rho}}\right) - \sum_{t=K+1}^{\tau_B}\sqrt{1-t^{-\rho}}\label{eq:xi-bound-1}\\
    &\leq \left(\tau_B-K\right)\left(2-\sqrt{1-\tau_B^{-\rho}}\right) - \sum_{t=K+1}^{\tau_B} 1 - t^{-\rho}\\
    &= \left(\tau_B-K\right)\left(1-\sqrt{1-\tau_B^{-\rho}}\right) + \sum_{t=K+1}^{\tau_B} t^{-\rho}\\
    &\leq \left(\tau_B-K\right)\left(1-\sqrt{1-\tau_B^{-\rho}}\right) + (K+1)^{-\rho} + \int_{t=K+1}^{\tau_B}t^{-\rho}\ dt \label{eq:xi-pre-integral-eval}
\end{align}

Next, we evaluate the integral for the cases $\rho = 1$ and $\rho \neq 1$.

\paragraph{Case 1: $\rho=1$.}
\begin{align}
    \xi(\tau_B,\rho) \leq \left(\tau_B-K\right)\left(1-\sqrt{1-\tau_B^{-1}}\right) + (K+1)^{-1} + \log \tau_B - \log (K+1)
\end{align}

For $\rho=1$, the first term in above equation converges, so $\xi(\tau_B,\rho=1)$ is in $\mathcal{O}(\log B)$. This implies that the overall regret of $\omega$-UCB is in $\mathcal{O}(\log B)$ for $\rho=1$.

\paragraph{Case 2: $\rho \neq 1$.}
\begin{align}
    \xi(\tau_B,\rho) \leq \left(\tau_B-K\right)\left(1-\sqrt{1-\tau_B^{-\rho}}\right) + (K+1)^{-\rho} + \frac{1}{1-\rho}\left( \tau_B^{1-\rho} - (K+1)^{1-\rho} \right)
\end{align}

For $\rho > 1$, $\xi(\tau_B,\rho)$ converges and thus the overall regret is in $\mathcal{O}(\log B)$. For $0<\rho<1$, one can show that $\xi(\tau_B,\rho)$ is in $\mathcal{O}(B^{1-\rho})$. Hence, the regret of $\omega$-UCB is in $\mathcal{O}(B^{1-\rho})$ for $0<\rho<1$.

To summarize, the regret of our policy is in $\mathcal{O}(B^{1-\rho})$ for $0<\rho<1$ and in $\mathcal{O}(\log B)$ for $\rho\geq 1$. 
\end{proof}

\section{Summary of related approaches}\label{app:related_work}

Table~\ref{tab:competitors_indexes} summarizes our direct competitors and how they compute $\Omega_k(t)$.

\begin{table*}[htb]
    \caption{Overview of our competitors}
    \label{tab:competitors_indexes}
    \centering
    \begin{tabular}{lcl}
        \toprule
         Policy& $\Omega_k(t)$ & Comment \\
         \midrule
         \multirow{6}{*}{BTS~\cite{xia_thompson_2015}}& \multirow{6}{*}{$\frac{\textrm{sample from } \mathrm{Beta}(\alpha_r(t), \beta_r(t))}{\textrm{sample from } \mathrm{Beta}(\alpha_c(t), \beta_c(t))}$} & $\alpha_r(t) = n_k(t)\avgreward + 1$ \\
         &&$\beta_r(t) = n_k(t) + 2 - \alpha_r(t)$\\
         &&$\alpha_c(t) = n_k(t)\avgcost + 1$\\
         &&$\beta_c(t) = n_k(t) + 2 - \alpha_c(t)$\\
         &&Discretization of con-\\ 
         &&tinuous values \\
         \midrule
         \multirow{2}{*}{m-UCB~\cite{xia_finite_2017}}& \multirow{2}{*}{$\frac{\min\{\avgreward + \varepsilon_k(t), 1\}}{\max\{\avgcost - \varepsilon_k(t), 0\}}$} & $\varepsilon_k(t) = \alpha\sqrt{\frac{\log(t-1)}{n_k(t)}}$\\
         &&Recommendation: $\alpha=2^{-4}$\\
         \midrule
         \multirow{2}{*}{c-UCB~\cite{xia_finite_2017}}& \multirow{2}{*}{$\frac{\avgreward}{\avgcost} + \frac{\varepsilon_k(t)}{\avgcost}$} & $\varepsilon_k(t) = \alpha\sqrt{\frac{\log(t-1)}{n_k(t)}}$\\
         &&Recommendation: $\alpha=2^{-3}$\\
         \midrule
         \multirow{2}{*}{i-UCB~\cite{xia_finite_2017}}& \multirow{2}{*}{$\frac{\avgreward}{\avgcost} + \varepsilon_k(t)$} & $\varepsilon_k(t) = \alpha\sqrt{\frac{\log(t-1)}{n_k(t)}}$ \\
         &&Recommendation: $\alpha=2^{-2}$\\
         \midrule
         \multirow{2}{*}{Budget UCB~\cite{xia_budgeted_2015}} & \multirow{2}{*}{$\frac{\avgreward}{\avgcost} + \frac{\varepsilon_k(t)}{\avgcost}\left( 
1 + \frac{\min\{ \avgreward + \varepsilon_k(t), 1 \}}{\max\{ \avgcost - \varepsilon_k(t), \lambda \}} \right)$} &$\varepsilon_k(t) = \sqrt{\frac{\log(t-1)}{n_k(t)}}$\\
&& $\lambda > 0$: minimum cost  \\
\midrule
\multirow{2}{*}{UCB-SC+~\cite{watanabe_ucb-sc_2018}}& \multirow{2}{*}[.18cm]{$\begin{array}{ll}
      \frac{\avgreward + \alpha_k(t)\avgcost}{\avgcost - \alpha_k(t)\avgreward}, & \mbox{if } \avgcost^2 > \frac{\log\frac{t}{n_k(t)}}{2n_k(t)}\\
      \infty, & \mbox{else}
    \end{array}$} & $\alpha_k(t) = \sqrt{\frac{\log \frac{t}{n_k(t)}}{2\kappa n_k(t) - \log \frac{t}{n_k(t)}}}$ \\
    && with $\kappa = \avgreward^2 + \avgcost^2$\\
         \bottomrule
    \end{tabular}
\end{table*}

\section{Additional results}

\subsection{Asymmetry of confidence interval}\label{app:asymmetry}

This section aims at provide an intuition about the level asymmetry of our confidence interval: the center of our CI is a weighted average of the sample mean and the center of the range of the random variable. This leads to CIs that are shifted towards the center of the range of the random variable. 
To see this, we can compare the distance between $\bar\mu$ and the interval center $B/2A$ to half the width of the confidence interval:
$$\mathrm{Asymmetry} = \frac{\bar\mu - \frac{B}{2A}}{\sqrt{\frac{B^2}{4A^2} - \frac{C}{A}}} \in [0,1]
$$
For Bernoulli random variables, after some derivations and inserting the definitions of $A,B,C$ as specified in Theorem~\ref{th:wilson-score}, we obtain
\begin{equation}\label{eq:asymmetry-bernoulli}
    \mathrm{Asymmetry} = \frac{(2\bar\mu-1)^2z^2}{4n\bar\mu(1-\bar\mu) + z^2}
\end{equation}

Figure 2 in the accompanying pdf plots the asymmetry measure for different values of $n$ over $\bar\mu$ and $z=3$. (1) For $\bar\mu=1$ and $\bar\mu=0$, the asymmetry takes on a maximum value of 1, while for $\bar\mu=0.5$, asymmetry is 0. (2) For a given value of $\bar\mu\in (0,1)$, asymmetry decreases with increasing sample size. (3) Related to this, we see that asymmetry is maximal for a given $\bar\mu$ for $n=1$. 

\begin{figure}[htb]
    \centering
    \includegraphics[width=.6\linewidth]{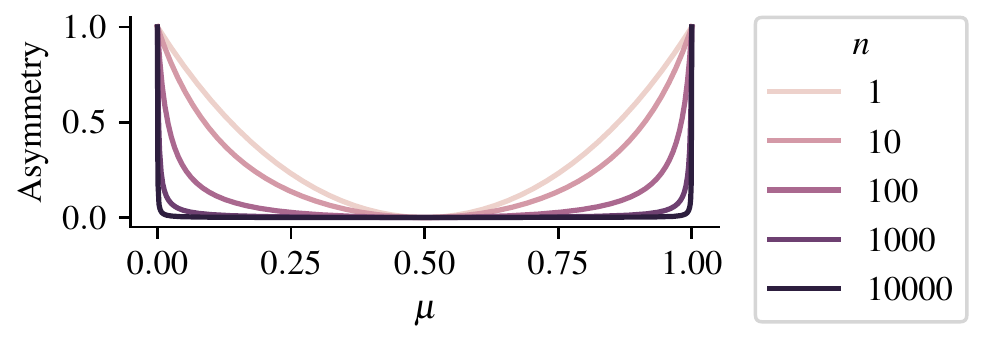}
    \caption{Asymmetry measure from \eq{eq:asymmetry-bernoulli} for Bernoulli rewards and costs for different $n$ and $\mu$ for $z=3$.}
    \label{fig:enter-label}
\end{figure}

% \bigskip
% \noindent Thank you for reading these instructions carefully. We look forward to receiving your electronic files!

\subsection{Evaluation results with confidence intervals}\label{sec:evaluation_results_ci}

Figure~\ref{fig:regret_evaluation_ci} shows our experimental results from Figure~\ref{fig:regret_evaluation} with 95\% confidence intervals.

\begin{figure*}[ht]
    \centering
    \begin{subfigure}[b]{\textwidth}
        \centering
        \includegraphics[width=\linewidth]{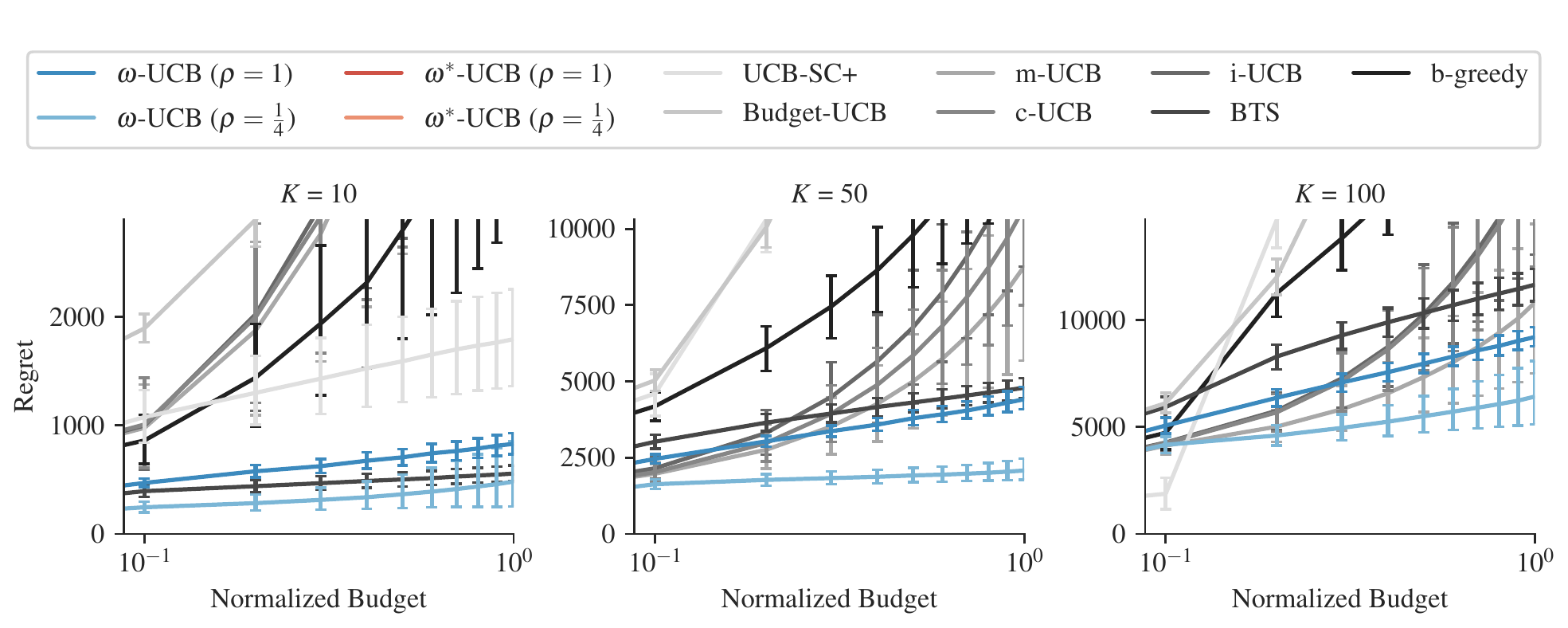}
        \caption{Settings S-Br-$\{10, 50, 100\}$}
        \label{fig:synth_bernoulli-ci}
    \end{subfigure}
    \hfill
    \begin{subfigure}[b]{\textwidth}
        \centering
        \includegraphics[width=\linewidth]{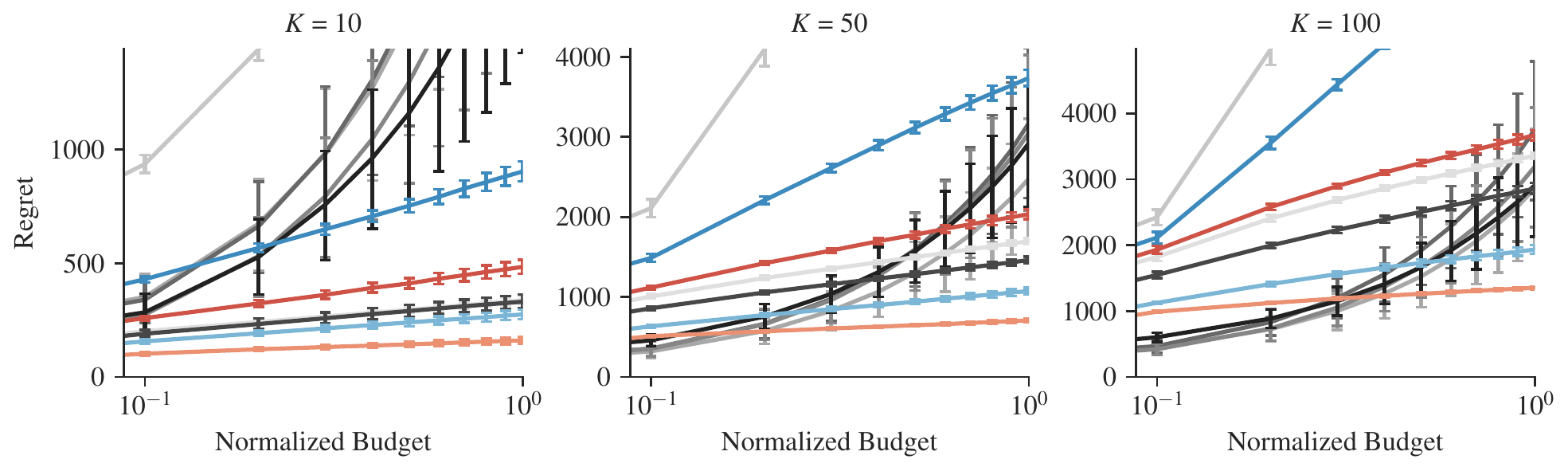}
        \caption{Settings S-GBr-$\{10, 50, 100\}$}
        \label{fig:synth_multinomial-ci}
    \end{subfigure}
    \hfill
    \begin{subfigure}[b]{\textwidth}
        \centering
        \includegraphics[width=\linewidth]{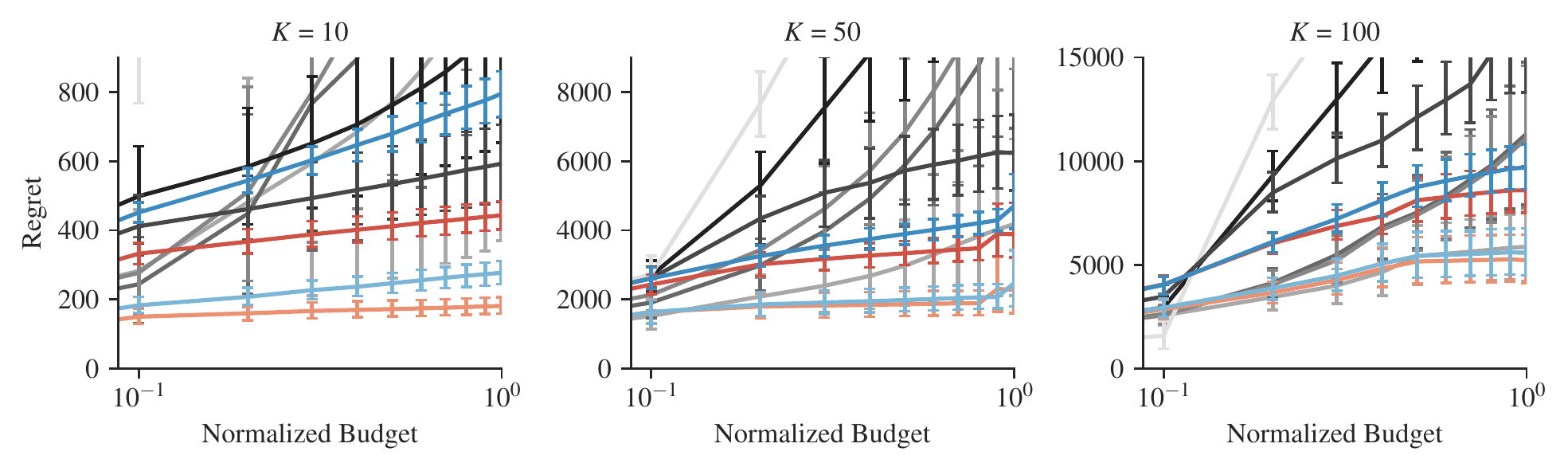}
        \caption{Settings S-Bt-$\{10, 50, 100\}$}
        \label{fig:synth_beta-ci}
    \end{subfigure}
    \centering
    \begin{subfigure}[b]{.33\textwidth}
         \centering
        \includegraphics[width=\linewidth]{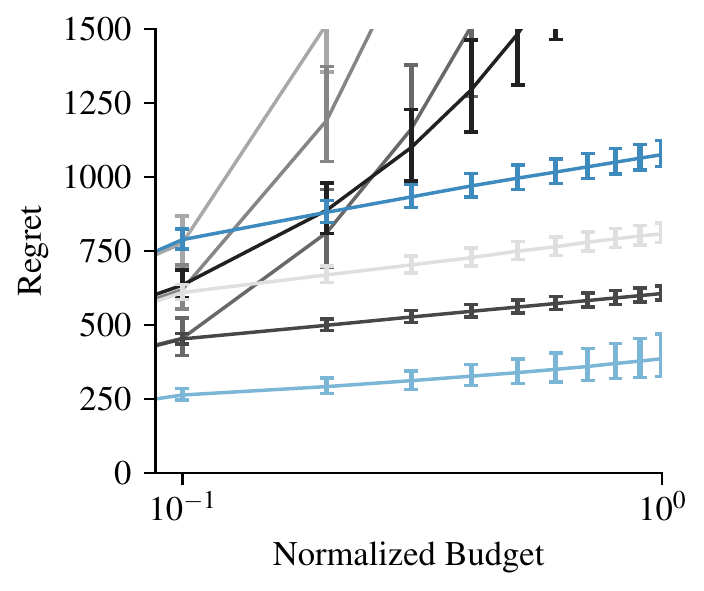}
        \caption{Setting FB-Br}
        \label{fig:facebook-bernoulli-ci}
    \end{subfigure}
    \begin{subfigure}[b]{.33\textwidth}
         \centering
          \includegraphics[width=\linewidth]{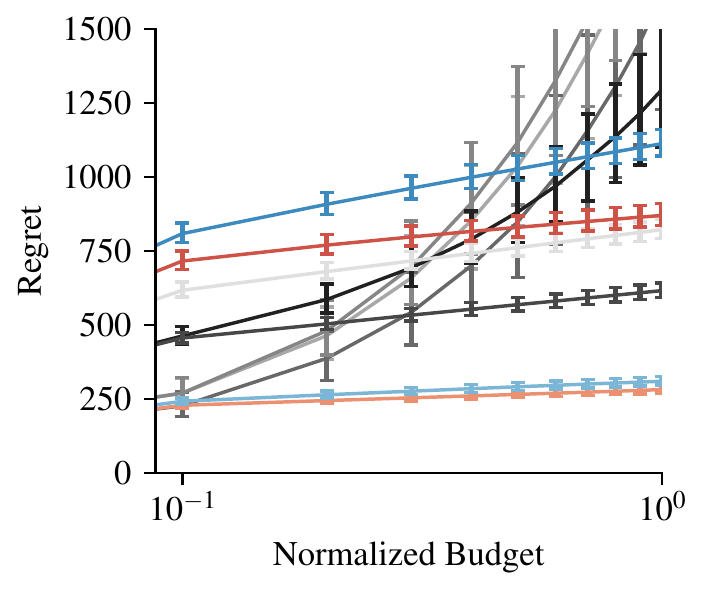}
         \caption{Setting FB-Bt}
         \label{fig:facebook-beta-ci}
    \end{subfigure}
    \begin{subfigure}[b]{.32\textwidth}
         \centering
          \includegraphics[width=\linewidth]{figures/histograms/K33_9.pdf}
         \caption{Example kde of $\expectrew,\expectcost$}
         \label{fig:_example_kde_}
    \end{subfigure}
    
    \caption{Evaluation results with 95\% confidence intervals}
\label{fig:regret_evaluation_ci}
\end{figure*}

% \subsection{Complete results of sensitivity study}\label{app:sensitivity_complete}

% Figure~\ref{fig:sensitivity_study_complete} shows the results of our sensitivity study, including $K=10$ and $K=100$.

% \begin{figure*}
%     \centering
%     \includegraphics[width=\linewidth]{figures/synthetic/sensitivity_study.pdf}
%     \caption{Complete results of sensitivity study with $K=\{10, 50, 100\}$}
%     \label{fig:sensitivity_study_complete}
% \end{figure*}

\end{document}